\title{Off-Policy Interval Estimation  with \\ Lipschitz Value Iteration}
\author{%
 Ziyang Tang\\
  University of Texas at Austin\\
  \texttt{ztang@utexas.edu} \\
  \And
  Yihao Feng \\
  University of Texas at Austin\\
  \texttt{yihao@cs.utexas.edu} \\
  \AND
  Na Zhang\\
  Tsinghua University\\
  \texttt{zhangna@pbcsf.tsinghua.edu.cn}\\
  \And
  Jian Peng\\
  University of Illinois at Urbana-Champaign\\
  \texttt{jianpeng@illinois.edu}\\
  \And
  Qiang Liu\\
  University of Texas at Austin\\
  \texttt{lqiang@cs.utexas.edu} \\
}
\begin{document}
\maketitle

\begin{abstract}
Off-policy evaluation provides an essential tool for evaluating the effects of different policies or treatments using only observed data. When applied to high-stakes scenarios such as medical diagnosis or financial decision-making, it is crucial to provide provably correct upper and lower bounds of the expected reward, not just a classical single point estimate, to the end-users, as executing a poor policy can be very costly. In this work, we propose a provably correct method for obtaining interval bounds for off-policy evaluation in a general continuous setting. The idea is to search for the maximum and minimum values of the expected reward among all the Lipschitz Q-functions that are consistent with the observations, which amounts to solving a constrained optimization problem on a Lipschitz function space. We go on to introduce a Lipschitz value iteration method to monotonically tighten the interval, which is simple yet efficient and provably convergent. We demonstrate the practical efficiency of our method on a range of benchmarks.
\end{abstract}

\section{Introduction}
Reinforcement learning (RL) \citep[e.g.,][]{sutton98beinforcement} \zt{has become} widely used in tasks like recommendation system, robotics, trading and healthcare \citep{murphy01marginal, li11unbiased, bottou13counterfactual, thomas17predictive}.
\zt {The current success of RL highly relies on excessive amount of data, which, however, is usually not available in many real world tasks where deploying a new policy is very costly or even risky.
} %
Off-policy evaluation (OPE) \citep[e.g.,][]{fonteneau13batch, jiang16doubly,liu2018breaking}, estimating the expected reward of a target policy using observational data gathered from previous behavior policies, therefore holds tremendous promise for designing data-efficient RL algorithms by leveraging on previously collected data.

Existing OPE methods mainly focus on \textit{point estimation}, which only provides a single point estimation of the expected reward. However, such point estimate can be rather unreliable as OPE often suffers from high error due to the lack of historical samples, policy shift or model misspecification. 
Further, for applications in high-stakes areas such as medical diagnosis and financial investment, a point estimate itself is far from enough and can even be dangerous if it is unreliable. Hence, it is essential to provide \textit{provably correct interval estimation} of the expected reward, which is both trustful and theoretically correct.

To address this problem, we propose a general optimization-based framework to derive a provably correct off-policy interval estimation based on historical samples.
Our idea is to search for the largest and smallest possible values of the expected reward, among all the Q-functions in a Lipschitz function space that are consistent with the observed historical samples.
This interval estimator is provably correct once the true Q-function satisfies the Lipschitz assumption.

Computing our upper and lower bounds amounts to solving a constrained optimization problem \zt{in} the space of Lipschitz functions.
\zt{We introduce a \emph{Lipschitz value iteration} algorithm of a similar style to value iteration.}
\zt{Our method} is efficient and provably convergent. 
In particular, our algorithm has a simple closed form update at each iteration and is guaranteed to monotonically tighten the bounds with a linear rate under mild conditions. 
To speed up the algorithm, we develop a double subsampling strategy,  \zt{which we only pick a random subsample of value functions to update in each iteration and use the same batch of data as constraints.}

We test our algorithm on a number of benchmarks and show that it can provide tight and provably correct bounds.

\noindent\textbf{Related Work}\quad
Our work is closely related to the off-policy point estimation.
There are typically two types of OPE methods, importance sampling (IS) based methods \citep[e.g.,][]{liu01monte, precup00eligibility, liu2018breaking, xie2019optimal} and value function or model based methods \citep[e.g.,][]{fonteneau13batch, liu2018representation, le2019batch, feng2019kernel}
Another line of work combines these two methods and yields a doubly-robust estimator for off-policy evaluation \citep{jiang16doubly, thomas2016data, kallus2019efficiently, tang2020doubly}.
In this work, we \zt{consider} the black box setting \zt{when the behavior policy is assumed to be unknown} \citep[e.g.,][]{
nachum2019dualdice,
mousavi2020blackbox,zhang2020gendice,fengaccountable2020}.

\zt{IS-based} point estimation methods 
naturally yield a confidence interval by standard concentration \citep{thomas2015high, thomas2015bhigh}.
\zt{Another major type of confidence interval estimation approaches} leverages the statistical procedure of bootstrapping \citep{white2010interval, hanna2017bootstrapping}.
However, these confidence intervals are typically loose due to the curse of horizon. Moreover, they heavily rely on the assumption that the off-policy data is drawn i.i.d. from a particular behavior policy. But this is \zt{not} always true since the policies usually evolve and depend on their previous policies.

\zt{Another related set of works are} PAC-RL \citep[e.g.,][]{jin2018q, dann2018policy, song2019efficient, yang2019learning}, which mainly focus on the regret bound or sample complexity of the Q-learning exploration.
As a side product, they also provide a confidence interval estimation for the value function.
However, this line of work mostly focuses on the tabular or linear MDPs. In contrast, our work aims to handle the general continuous MDPs by leveraging 
\zt{Lipschitz properties of Q-functions}.
\citet{song2019efficient} provides a metric embedding style Q-learning method, \zt{with a particular} focus on finite horizon settings.

\section{Background and Problem Settings}

We firstly set up the problem of 
off-policy interval evaluation and then introduce the related background on Q-learning and Bellman equation. 

\textbf{Markov Decision Process}\quad
A Markov decision process (MDP) 
$M = \langle \Sset, \Aset, r, \T, \mu_0, \gamma\rangle$ 
consists of a state space $\Sset$, an action space $\Aset$, an unknown deterministic reward function $r:\Sset\times \Aset \to \RR$, an unknown transition function $\T:\Sset\times \Aset \to \Sset$, an initial state distribution $\mu_0$, and a discounted factor $\gamma$.  
Throughout this work, 
we assume that the transition and reward function are deterministic for simplicity and we can easily draw samples from the initial state distribution $\mu_0$.

\zt{In RL,} an agent acts in a MDP following a policy $\pi(\cdot |s)$, 
which prescribes a distribution over the action space $\Aset$ given each state $s \in \Sset$.
Running the policy starting from the initial distribution $\mu_0$ yields a random trajectory $\tau := \{s_i,a_i,r_i\}_{1\leq i\leq T}$,
 where $s_i, a_i, r_i$ represent the state, action, reward at time $i$ respectively.  
We define the infinite horizon discounted reward of $\pi$ as 
$
\Rpi := \lim_{T\to \infty} \E_{\tau \sim \pi}
\left [{\sum_{i=0}^T \gamma^i r_i}\right]\,,
$
where $\gamma \in (0,1)$ is a discounting factor; $T$ 
is the horizon length of the trajectory, which we assume to approach infinite, hence yielding an \emph{infinite horizon problem}; 
$\E_{\tau \sim \pi}[\cdot]$ denotes the expectation of the random trajectories collected under the policy $\pi$. 

\textbf{Black Box Off-Policy Interval Estimation}\quad
We are interested in the problem of \emph{black-box off-policy  interval evaluation}, \zt{which requires arguably the minimum assumptions on the off-policy data}.
It amounts to providing an interval estimation $[\underline{\Rpi}, \overline{\Rpi}]$ of the expected reward $R^\pi$ of a policy $\pi$ (called the target policy), given a set of transition pairs 
$\{s_i,a_i,s_i',r_i\}_{i=1}^n$ collected under a \emph{different, unknown} behavior policy, or even a mix of different policies; 
here $s_i' = \T(s_i,a_i)$ and  $r_i = r(s_i,a_i)$ 
denote the next state and the local reward following $(s_i,a_i)$ respectively.

\textbf{Q-function and Bellman Equation}\quad 
We review the properties of the Q-function that is most relevant to our work.
The Q-function $Q^{\pi}(s,a)$ %
specifies the expected reward when following $\pi$ from the state-action pair $(s,a)$
and is known to be the unique fixed point of the Bellman equation: 
\begin{align}\label{equ:ExactBellman} 
Q(s,a) = \mathcal B^\pi Q(s,a) := r(s,a) + \gamma \Ppi Q(s,a), ~~~~\forall (s,a)\in \Sset \times \Aset\,,
\end{align} 
where $\mathcal B^\pi$ denotes the Bellman operator, 
and $\Ppi$ is the transition operator defined by 
\begin{equation}
\label{eqn:transition_op}
    \Ppi Q(s,a) := \E_{s'=\T(s,a), a'\sim \pi(\cdot|s')}[Q(s',a')]\,.
\end{equation}
An expected reward associated with a 
Q-function is defined via  %
\begin{align} \label{equ:defRQ} 
\zt{R_{\mu_{0,\pi}}}[Q]:= %
\E_{s_0, a_0 \sim \mu_{0,\pi}}\left[Q(s_0,a_0)\right]\,,
\end{align}
where we use $\mu_{0,\pi}(s,a) = \mu_0(s)\pi(a|s)$ to denote the joint initial state-action distribution.

\textbf{Off-Policy Q-Learning}\quad
Q-function can be learned in an off-policy manner. 
Assume we have a set of transition pairs $\mathcal D:= \{s_i, a_i, s_i, r_i'\}_{i=1}^n$. 
Under the assumption of deterministic transition and reward, we can estimate the Bellman operator on each of the data point: 
$$
\B^{\pi} Q(s_i,a_i) = r_i + \E_{a'\sim \pi(\cdot|s'_i)}[ Q(s_i', a') ]\,, 
$$
which can be estimated with an arbitrarily high accuracy via drawing a large number of samples from $\pi(\cdot|s_i')$. 
Assume $Q^\pi$ belongs to a class of functions $\mathcal F$, we can estimate $Q^\pi$ by finding a $Q \in \F$ that satisfies the Bellman equation on the data points: 
\begin{align} \label{equ:Bellmani}
Q \in \mathcal F, ~~~s.t.~~~Q(s_i,a_i) = \B^\pi Q(s_i,a_i), ~~~ \forall i\in [n]. 
\end{align}
Compared with the exact Bellman equation \eqref{equ:ExactBellman}, 
we only match the equation on the observed data \eqref{equ:Bellmani}, which may yield multiple or even infinite solutions of $Q$.  
Therefore, the function class $\F$ needs to be sufficiently constrained in order to yield meaningful solutions.  
In practice, \eqref{equ:Bellmani} is often solved using fitted value iteration \citep{munos2008finite}, which starts from an initial $Q_0$, and then perform iterative updates by 
\begin{align}\label{equ:fittedQ}
Q_{t+1} \gets \argmin_{Q\in \mathcal F}
\sum_{i=1}^n\left(Q(s_i,a_i) - \mathcal B^\pi Q_{t}(s_i, a_i)\right)^2. 
\end{align}
It is easy to see $R^\pi = R[Q^\pi]$. 
With an estimation of $Q^{\pi}$, the expected reward $R^\pi$ in \eqref{equ:defRQ}
can be estimated by Monte Carlo sampling from $\mu_0$ and $\pi$.

\section{Main Method}
We now introduce our main framework for providing provably correct upper and lower bounds of the expected reward. 
For notation, we use $x = (s,a)$ to represent \zt{a} state-action pair.
\subsection{Motivation and Optimization Framework}
When the fitted value iteration in \eqref{equ:fittedQ} converges, it only provides one Q-function that yields a point estimation of the reward. 
To get an interval estimation, we expect the fitted value iteration to provide two Q-functions $\overline{Q}$ and $\underline{Q}$, 
such that all possible $Q$ consistent with the data points lie between $\overline{Q}$ and $\underline{Q}$, e.g. $\overline{Q}\succeq Q \succeq \underline{Q}$, \zt{where $f\succeq g$ means $f(x) \geq g(x),~\forall x$.} 

More concretely, consider a function set $\mathcal F$ that is expected to include $Q^\pi$, 
we construct an upper bound of $R^\pi$ by 
\begin{equation}
\label{eqn:upper_bound_opt_framework}
\overline{\Rpi_{\F}} 
= \sup_{Q\in \F} 
\left  \{
\zt{R_{\mu_{0,\pi}}}[Q], ~~
\text{s.t.}~~ Q(x_i) \leq \mathcal B^\pi Q(x_i),~\forall i\in [n]
\right \},
\end{equation}
where $R[Q]$ is defined in \eqref{equ:defRQ}. It is easy to see that $\overline{\Rpi_{\F}}  \geq R^\pi$ if $Q^\pi \in \mathcal F$ holds;
this is because $Q^\pi$ satisfies the constraints in the optimization, and hence 
$\overline{R^\pi_{\mathcal F}}\geq R[Q^\pi] = R^\pi$ as a result of the optimization.

It is worthy to note that we use the Bellman inequality constraint in \eqref{eqn:upper_bound_opt_framework}, 
which would not cause any looseness compared to the equality constraint. %
To see this, note that the exact reward $R^\pi$ can be framed into \citep{bertsekas1995dynamic}
\begin{equation*} 
\Rpi  
= \sup_{Q\in \F} 
\left  \{
\zt{R_{\mu_{0,\pi}}}[Q], ~~
\text{s.t.}~~ Q(x) \leq \mathcal B^\pi Q(x),~\forall 
x \in 
\mathcal S\times \mathcal A
\right \}\,, 
\end{equation*}
which implies that $\overline{R^\pi_{\mathcal F}}$ would converge to $\Rpi$ as data points increase.  %

We can construct the lower bound in a similar way: %
\begin{equation}
\label{eqn:lower_bound_opt_framework}
\underline{\Rpi_{\F}} 
= \inf_{Q\in \F} 
\left  \{
\zt{R_{\mu_{0,\pi}}}[Q], ~~
\text{s.t.}~~ Q(x_i) \geq \mathcal B^\pi Q(x_i),~\forall i\in [n]
\right \}. 
\end{equation}
Define $I_{\F} = [\underline{\Rpi_{\F}}, \overline{\Rpi_{\F}}]$ as the interval estimation for $\Rpi$, once the true $Q^\pi$ lies in the function space $\F$, $\Rpi$ lies in $I_{\F}$ provably.

\textbf{Benefits of Our Framework}\quad
Our optimization framework enjoys several advantages compared with the existing methods.
First of all, unlike the standard concentration bounds, our bounds do not rely on the i.i.d. assumption \zt{of transition pairs $\{s_i, a_i\}$.} 
\zt{In RL settings, the historical transition pairs are highly dependent to each other: on one hand, in sequential data stream, the current step of state is the next state in the previous step; on the other hand, the behavior policy that generates the trajectories is also evolving during the learning process.}
Secondly, under our optimization framework, more data would enable us to add more constraints in our searching space and therefore get tighter bounds accordingly. 
This property allows a trade-off between the time complexity and the tightness of the bounds, which we will further discuss in section \ref{sec:subsampling}.
Last but not least, the tightness of \zt{the} bounds depends on the capacity of the function space $\F$, \zt{which naturally yields the nice monotonicity property}. 
It is easy to see if $Q^\pi \in \F_1 \subset \F_2$, we will have $\Rpi \in I_{\F_1}\subset I_{\F_2}$.

\paragraph{Lipschitz Function Space} 
To implement this framework, 
it is important to choose a proper function set $\mathcal F$ and solve the optimization process efficiently.
Intuitively, \zt{$\F$}
should be rich enough to include the true value function, but not be too large to cause the bounds to be vacuous.  
In light of this, we propose to use the space of Lipschitz functions.  %
Let $\mathcal X$ be a metric space equipped with a distance $d\colon \mathcal X\times \mathcal X \to \RR$. 
We propose to take $\mathcal F$ to be a ball in the Lipschitz function space: 
\begin{align} \label{equ:defLip}
\mathcal F_\eta  := \{f \colon \|f\|_{d,\text{Lip}} \leq \eta\}, &&  \text{where} && 
\|f\|_{d,\text{Lip}} = 
\sup_{x,x'\in \mathcal X,~x\neq x'} \frac{|f(x)-f(x')|}{d(x,x')},
\end{align}
where $\|f\|_{d,\text{Lip}}$ is the Lipschitz norm of $f$ and 
$\eta$ is a radius parameter. 

Although the Lipschitz space yields an infinite dimensional optimization, our key technical contribution is to show that it is possible to calculate the upper and lower bounds with an efficient fixed point algorithm. 
\zt{We form it as} an optimization problem and solve it efficiently in a value iteration like fashion.   
In addition, in Section \ref{sec:tightness-lips}, we show that the size of the Lipschitz space enables us to establish a diminishing bound on the gap of the upper and lower bounds.

\begin{wrapfigure}{r}{.22\linewidth}
\vspace{-2.5\baselineskip}
    \centering
    \hspace{-.03\textwidth}
    \includegraphics[width = 1.1\linewidth]{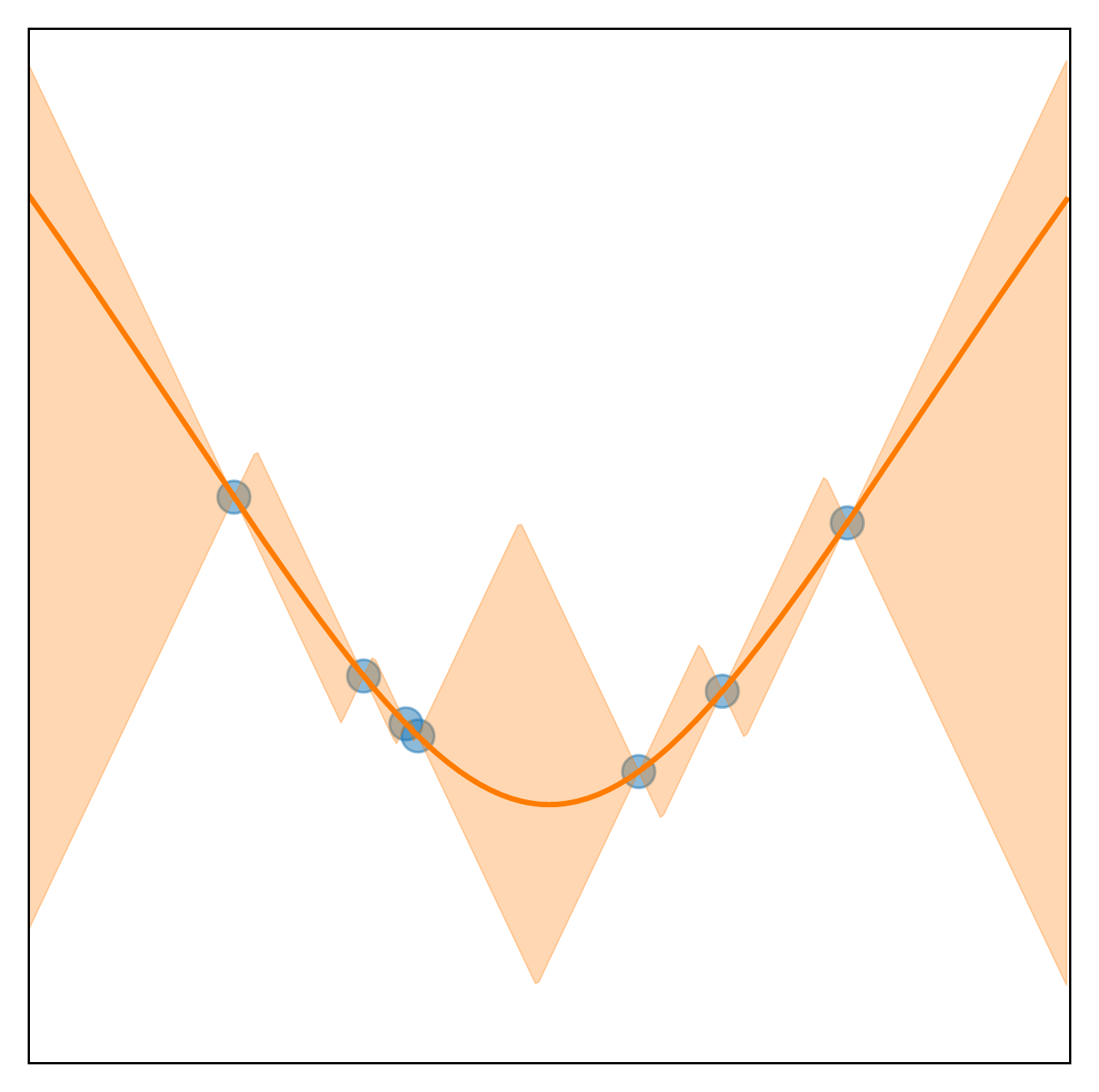}
    \vspace{-1.5em}
    \caption{
    \footnotesize{Illustration}
    }
    \vspace{-2.2em}
    \label{fig:example-lips-bound}
\end{wrapfigure}

\subsection{Optimization in Lipschitz Function Space}
\label{sec:alg_framework}

We propose a novel value iteration-like algorithm for solving the optimizations in \eqref{eqn:upper_bound_opt_framework}
 and \eqref{eqn:lower_bound_opt_framework}.
 We focus on the upper bound 
 \eqref{eqn:upper_bound_opt_framework}, as the case of the lower bound is similar and discussed in Appendix \ref{sec:lower_bound}.

Our algorithm enjoys the same spirit \zt{as} the fitted value iteration \eqref{equ:fittedQ}. We apply the Bellman operator on the current estimation of $Q$, and then use it as the constraint on the Bellman inequalities to obtain a new estimation.

\begin{algorithm*}[t] 
\caption{Lipschitz Value Iteration (for Upper Bound)}  
\label{alg:fitted_iteration}
\begin{algorithmic} 
\STATE {\bf Input}: Transition data $\D = \{ s_i, a_i, s_i', r_i\}_{1\leq i\leq n}$; 
discounted factor $\gamma$;
target policy $\pi$; Lipschitz constant $\eta$.
\STATE {\bf Initialize} 
 $\overline{q}_{0,i}$ with criterion like \eqref{equ:defq0}.  
\FOR{iteration $t$}%
\STATE {\bf Update:}
$
    \overline{q}_{t+1,i} = \B^\pi \overline{Q}_t(x_i),  \quad \text{where}~~\overline{Q}_t(x) = \min_{j\in[n]} \left \{ \overline{q}_{t,j} + \eta d(x, x_j) \right\} 
$
\ENDFOR
\STATE {\bf Return}: upper bound: $\overline{\Rpi_{\F}}  = \E_{s,a\sim \mu_{0,\pi}}\left [\min_{j\in[n]} \left \{ \overline{q}_{t,j} + \eta d(x, x_j)\right\}\right].$
\end{algorithmic} 
\end{algorithm*}

Specifically, starting from an initial $\overline{Q}_0$ with $\overline{q}_{0,i} := \overline{Q}_0(x_i)$, at the $t$-th iteration, we update by 
\begin{align}
\label{equ:algupdate}
{\overline{Q}}_{t}  = \arg\max_{Q\in \F}\left \{
\zt{R_{\mu_{0,\pi}}}[Q],~~
\text{s.t.}~~Q(x_i) \leq \overline{q}_{t,i}, \forall i \in [n]\right\},
&& \overline{q}_{t+1,i} =\mathcal B^{\pi} \overline{Q}_t(x_i), ~~~~~\forall i \in [n].
\end{align}
This update requires to solve a constrained optimization on the space $\F.$ 
For our choice of the Lipschitz function space, this yields a simple closed form solution. 
\begin{pro}
\label{pro:lips_update}
Suppose \zt{$\mu_{0,\pi}(x) > 0,~\forall x \in \Sset\times \Aset$}, 
consider the optimization in \eqref{equ:algupdate} with $\mathcal F = \mathcal F_\eta$. We have 
\begin{align} 
\label{eqn:lips_update}
    \overline{Q}_{t}(x) = \min_{j\in[n]}(\overline{q}_{t,j} + \eta d(x, x_j)),  
    &&\overline{q}_{t+1,i} = \mathcal B^\pi \overline{Q}_t(x_i), ~~~~\forall i \in [n].
\end{align}
\end{pro}
\vspace{-1em}
Intuitively, 
$\overline{Q}_t$ in \eqref{eqn:lips_update}  yields an 

\textit{upper envelope} of all the possible Lipschitz functions $Q$ that satisfy $Q(x_i) \leq  \overline{q}_{t,i},~\forall i$, and hence solves the optimization in \eqref{equ:algupdate}, as $R[Q]$ is monotonically increasing with $Q$.  
The updates for the lower bound can be derived similarly by calculating the \emph{lower envelopes}: 
\vspace{-0.6em}
\begin{align} 
\label{eqn:lips_update_lower}
    \underline{Q}_{t}(x) = \max_{i\in[n]}(\underline{q}_{t,i} - \eta d(x, x_i)),  
    &&\underline{q}_{t+1,i} = \mathcal B^\pi \underline{Q}_t(x_i), \forall i\in [n].
\end{align}

Note that these updates can be simplified to only keeping track of the Q-function values at the data points  $\{\overline{q}_{t,i}\}_{i=1}^n$, as summarized in Algorithm \ref{alg:fitted_iteration}.
Figure \ref{fig:example-lips-bound} illustrates how the upper and lower envelopes bound all the possible Lipschitz functions %
going through the same set of data points.

\subsection{Convergence Analysis}
Our algorithm enjoys two important and desirable properties: one is monotonic convergence, which indicates that you can stop any time, depending on your time budget, \zt{and} still get a valid lower/upper bound; the other is linear convergence, which implies that it only needs logarithmic time steps to converge. Thanks to these properties, our method is much faster compared to directly solving the convex optimization in \eqref{eqn:upper_bound_opt_framework} with (stochastic) sub-gradient ascent.

The monotonic convergence relies on the initial upper bound we pick, which is inspired by the monotonicity of the Bellman operator.
See Appendix \ref{sec:monotonic-proof} for more details.

\begin{thm}
\label{thm:monotonic}
Following the update in \eqref{eqn:lips_update} starting from 
\begin{align}\label{equ:defq0}
    \overline{q}_{0,i} = \frac{1}{1-\gamma}\left(r_i + \gamma \eta\E_{x_i'\sim \T^\pi(\cdot|x_i)}[d(x_i, x_i')]\right), 
\end{align}
we have
\begin{equation}
    \overline{Q}_{t}\succeq \overline{Q}_{t+1} \succeq \overline{Q}^\pi,~\forall t= 0, 1,2, \ldots,
\end{equation}
where $\overline{Q}^\pi = \arg\max_{Q\in \F}\{R[Q],~\text{s.t. } Q(x_i) \leq \B^\pi Q(x_i),~\forall i\in[n]\}$.
Therefore, 
$$
\zt{R_{\mu_{0,\pi}}}[\overline{Q}_{t}] \geq \zt{R_{\mu_{0,\pi}}}[\overline{Q}_{t+1}]  \geq  \overline{R^\pi_{\F}}\geq R^\pi, 
$$ 
and $\lim_{t\to \infty}\zt{R_{\mu_{0,\pi}}}[\overline{Q}_{t}] = \overline{R^\pi_{\F}}$.
\end{thm}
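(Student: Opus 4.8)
The plan is to exploit the monotone-operator structure hidden in \eqref{eqn:lips_update}. Writing the iteration in terms of the data-point values $\overline{q}_t=(\overline{q}_{t,1},\dots,\overline{q}_{t,n})$, I would define the coordinatewise map
$$\mathcal{G}(\overline{q})_i = \mathcal{B}^\pi\Big[\min_{j\in[n]}\big(\overline{q}_j + \eta\, d(\cdot,x_j)\big)\Big](x_i) = r_i + \gamma\,\E_{x_i'\sim\T^\pi(\cdot|x_i)}\Big[\min_{j\in[n]}\big(\overline{q}_j + \eta\, d(x_i',x_j)\big)\Big],$$
so that $\overline{q}_{t+1}=\mathcal{G}(\overline{q}_t)$ and $\overline{Q}_t$ is the envelope of $\overline{q}_t$. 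Two elementary facts drive everything, and I would record them as lemmas first: (i) the minimum $\min_j(\overline{q}_j+\eta\, d(\cdot,x_j))$ is increasing in its values $\overline{q}_j$, so a coordinatewise inequality $\overline{q}\le\overline{q}'$ propagates to the envelopes $\overline{Q}\preceq\overline{Q}'$ on all of $\Sset\times\Aset$; and (ii) $\mathcal{G}$ is monotone, since $\mathcal{B}^\pi$ preserves order (it is an expectation scaled by $\gamma>0$). Thus $\overline{q}\le\overline{q}'$ implies $\mathcal{G}(\overline{q})\le\mathcal{G}(\overline{q}')$.

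Next I would prove the two bracketing inequalities by induction, each reducing to a base case plus fact (ii). For the decreasing part $\overline{Q}_t\succeq\overline{Q}_{t+1}$ it suffices to show $\overline{q}_1\le\overline{q}_0$. This is exactly what the initialization \eqref{equ:defq0} is engineered for: taking the knot $j=i$ in the envelope gives $\overline{Q}_0(x_i')\le\overline{q}_{0,i}+\eta\, d(x_i',x_i)$, hence $\overline{q}_{1,i}=\mathcal{B}^\pi\overline{Q}_0(x_i)\le r_i+\gamma\overline{q}_{0,i}+\gamma\eta\,\E_{x_i'\sim\T^\pi(\cdot|x_i)}[d(x_i,x_i')]$, and \eqref{equ:defq0} makes the right-hand side equal to $\overline{q}_{0,i}$. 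For the lower bracket $\overline{Q}_{t+1}\succeq\overline{Q}^\pi$, write $\overline{q}^\pi_i:=\overline{Q}^\pi(x_i)$; the very same computation, using feasibility $\overline{q}^\pi_i\le\mathcal{B}^\pi\overline{Q}^\pi(x_i)$ and $\eta$-Lipschitzness $\overline{Q}^\pi(x_i')\le\overline{q}^\pi_i+\eta\, d(x_i',x_i)$, yields $(1-\gamma)\overline{q}^\pi_i\le r_i+\gamma\eta\,\E[d]$, i.e. $\overline{q}^\pi\le\overline{q}_0$. The induction then closes via fact (ii) together with $\mathcal{G}(\overline{q}^\pi)\ge\overline{q}^\pi$: the envelope of $\overline{q}^\pi$ dominates $\overline{Q}^\pi$, so $\mathcal{G}(\overline{q}^\pi)_i=\mathcal{B}^\pi[\mathrm{env}(\overline{q}^\pi)](x_i)\ge\mathcal{B}^\pi\overline{Q}^\pi(x_i)\ge\overline{q}^\pi_i$, giving $\overline{q}_{t+1}=\mathcal{G}(\overline{q}_t)\ge\mathcal{G}(\overline{q}^\pi)\ge\overline{q}^\pi$. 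Applying (i) lifts both chains to the whole space, and monotonicity of $R[\cdot]$ under $\mu_{0,\pi}>0$ turns them into $R[\overline{Q}_t]\ge R[\overline{Q}_{t+1}]\ge\overline{R^\pi_\F}\ge R^\pi$, the last inequality being the feasibility of $Q^\pi$ already noted in the text.

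For the limit, the decreasing, bounded-below sequences $\overline{q}_t\downarrow\overline{q}_\infty$ and $\overline{Q}_t\downarrow\overline{Q}_\infty$ converge pointwise; I would argue $\overline{Q}_\infty$ is a fixed point ($\overline{q}_\infty=\mathcal{G}(\overline{q}_\infty)$), so it satisfies the Bellman equality at the data points and is therefore feasible for \eqref{eqn:upper_bound_opt_framework}, yielding $R[\overline{Q}_\infty]\le\overline{R^\pi_\F}$; combined with $R[\overline{Q}_\infty]=\lim_t R[\overline{Q}_t]\ge\overline{R^\pi_\F}$ this forces equality. The main obstacle is precisely this passage to the limit, which requires exchanging the limit with the expectation defining $\mathcal{B}^\pi$ and with the integral defining $R[\cdot]$. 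Because the envelopes decrease monotonically and stay above $\overline{Q}^\pi$, I expect the monotone convergence theorem to justify both exchanges, giving continuity of $\mathcal{G}$ along the monotone orbit and $R[\overline{Q}_\infty]=\lim_t R[\overline{Q}_t]$; I would isolate the needed regularity (boundedness of $r$ and $d$, measurability of the relevant maps) as the mild conditions under which the conclusion holds.
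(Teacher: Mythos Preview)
Your proposal is correct and shares the paper's core ingredients: monotonicity of the envelope in its knot values (your fact (i), the paper's Lemma~\ref{lem:point2function}), monotonicity of $\mathcal{B}^\pi$ (your fact (ii)), and the verification that the initialization \eqref{equ:defq0} forces $\overline{q}_1\le\overline{q}_0$ (the paper's Lemma~\ref{lem:Lips_superior_init}(ii), via the identical one-line calculation). Where you diverge is in handling the lower bracket $\overline{Q}_t\succeq\overline{Q}^\pi$ and the limit. The paper does \emph{not} prove $\overline{Q}_t\succeq\overline{Q}^\pi$ directly: it only establishes the decrease $\overline{Q}_t\succeq\overline{Q}_{t+1}$, argues the sequence converges to some $\overline{Q}_\infty$, and then invokes a separate contraction/uniqueness result (Lemma~\ref{lem:inequality2equality}, which characterizes the optimum of \eqref{eqn:upper_bound_opt_framework} as the unique fixed point of an ``upper-envelope Bellman equation'') to identify $\overline{Q}_\infty=\overline{Q}^\pi$; the bracket for finite $t$ then follows a posteriori. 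Your route---show $\overline{q}^\pi$ is a subsolution of $\mathcal{G}$ dominated by $\overline{q}_0$, propagate by monotonicity, and close the limit by noting $\overline{Q}_\infty$ is feasible so $R[\overline{Q}_\infty]\le\overline{R^\pi_\F}$---sidesteps the contraction machinery altogether and yields the pointwise bracket at every finite $t$ without first passing through the limit. This is more elementary and in fact works verbatim with $\overline{Q}^\pi$ replaced by \emph{any} feasible $Q\in\F_\eta$, which immediately gives $R[\overline{Q}_t]\ge\overline{R^\pi_\F}$ as a supremum without assuming the argmax is attained. What the paper's approach buys in return is the stronger conclusion $\overline{Q}_\infty=\overline{Q}^\pi$ as functions (not merely $R[\overline{Q}_\infty]=\overline{R^\pi_\F}$), which your sandwich does not by itself deliver.
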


We establish a fast linear convergence rate for 
the updates in \eqref{eqn:lips_update}.
\zt{The convergence result here does not need the initialization of $\overline{q}_0$.} 
\begin{pro}
\label{pro:convergence_rate}
Following the updates in \eqref{eqn:lips_update} \zt{under arbitrary initialization}, 
with constant $C := \max_{i\in [n]}|\overline{q}_{1,i} - \overline{q}_{0,i}|$ we have 
$$
 \zt{R_{\mu_{0,\pi}}}[\overline{Q}_t] - \overline{\Rpi_\F} \leq C \frac{\gamma^t}{1-\gamma}.
$$
\end{pro}

\subsection{Tightness of Lipschitz-Based Bounds}
\label{sec:tightness-lips}

We provide a quantitative  estimation of  the gap $(\overline{\Rpi_{\F}} - \underline{\Rpi_{\F}})$ between the upper and lower bounds when using the Lipschitz function set. We show that it depends on a notion of the covering radius of the data points in the domain.

\begin{thm}
\label{thm:error_bound}
Let $\F = \F_{\eta}$ be the Lipschitz function class with Lipschitz constant $\eta$.
Suppose $\X = \Sset \times \Aset$ is a compact and bounded domain equipped with a distance $d\colon \X\times \X \to \RR$.
For a set of data points 
$X = \{x_i\}_{i=1}^n$, 
we have 
$$
\overline{\Rpi_{\F}} - \underline{\Rpi_{\F}} \leq \frac{2\eta}{1-\gamma} \varepsilon_X,
$$
where $\gamma$ is the discount factor and $\varepsilon_{X} = \sup_{x\in \X} \min_{i} d(x,x_i)$ is the covering radius.
\end{thm}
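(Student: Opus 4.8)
The plan is to reduce the bound on the scalar gap $\overline{R^\pi_{\F}} - \underline{R^\pi_{\F}}$ to a uniform (sup-norm) bound on the gap between the two optimal envelopes. Write $\overline{Q}^\pi$ for the maximizer of \eqref{eqn:upper_bound_opt_framework} and $\underline{Q}^\pi$ for the minimizer of \eqref{eqn:lower_bound_opt_framework}; by Proposition \ref{pro:lips_update} and the convergence in Theorem \ref{thm:monotonic} these take the envelope forms $\overline{Q}^\pi(x) = \min_j(\overline{q}_j + \eta d(x,x_j))$ and $\underline{Q}^\pi(x) = \max_i(\underline{q}_i - \eta d(x,x_i))$. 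Since the optimal values equal $R_{\mu_{0,\pi}}[\overline{Q}^\pi]$ and $R_{\mu_{0,\pi}}[\underline{Q}^\pi]$, I would first observe
\[
\overline{R^\pi_{\F}} - \underline{R^\pi_{\F}} = \E_{x\sim\mu_{0,\pi}}\big[\overline{Q}^\pi(x) - \underline{Q}^\pi(x)\big] \le \Delta, \qquad \Delta := \sup_{x\in\X}\big(\overline{Q}^\pi(x) - \underline{Q}^\pi(x)\big),
\]
so it suffices to prove $\Delta \le 2\eta\varepsilon_X/(1-\gamma)$. Finiteness of $\Delta$ is immediate because compactness of $\X$ and boundedness of $d$ make both envelopes bounded and continuous.

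The argument then rests on combining two estimates. The first is a \emph{local} estimate coming from the envelope structure: for any $x$, choosing $i(x)\in\argmin_{i} d(x,x_i)$ so that $d(x,x_{i(x)})\le \varepsilon_X$, and retaining the single index $j=i(x)$ inside the $\min$ and the $\max$, gives
\[
\overline{Q}^\pi(x) \le \overline{q}_{i(x)} + \eta\varepsilon_X, \qquad \underline{Q}^\pi(x) \ge \underline{q}_{i(x)} - \eta\varepsilon_X,
\]
hence $\overline{Q}^\pi(x) - \underline{Q}^\pi(x) \le (\overline{q}_{i(x)} - \underline{q}_{i(x)}) + 2\eta\varepsilon_X$. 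The second is a \emph{global} contraction at the data points: at optimality the Bellman constraints are active, so $\overline{q}_i = \B^\pi\overline{Q}^\pi(x_i) = r_i + \gamma\,\E_{x_i'\sim\T^\pi(\cdot|x_i)}[\overline{Q}^\pi(x_i')]$ and likewise for $\underline{q}_i$; subtracting, the reward $r_i$ cancels and the transition expectation is bounded by the sup, giving
\[
\overline{q}_i - \underline{q}_i = \gamma\,\E_{x_i'\sim\T^\pi(\cdot|x_i)}\big[\overline{Q}^\pi(x_i') - \underline{Q}^\pi(x_i')\big] \le \gamma\Delta, \qquad \forall i\in[n].
\]

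Substituting the second estimate into the first yields $\overline{Q}^\pi(x) - \underline{Q}^\pi(x) \le \gamma\Delta + 2\eta\varepsilon_X$ for every $x$; taking the supremum over $x$ produces the self-referential inequality $\Delta \le \gamma\Delta + 2\eta\varepsilon_X$, which rearranges to $\Delta \le 2\eta\varepsilon_X/(1-\gamma)$ because $\gamma\in(0,1)$, and the theorem follows. The step I expect to need the most care is justifying that the Bellman inequalities are \emph{tight} at the data points for the optimizers, i.e.\ that $\overline{q}_i = \B^\pi\overline{Q}^\pi(x_i)$ and $\underline{q}_i = \B^\pi\underline{Q}^\pi(x_i)$, since this is precisely what turns the local covering-radius term into a genuine $\gamma$-contraction; rather than deriving activeness from KKT conditions, I would invoke the fixed-point characterization guaranteed by the limit in Theorem \ref{thm:monotonic} and its lower-bound analogue. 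A secondary point to verify is that the suprema and the expectation $\E_{\mu_{0,\pi}}[\cdot]$ are well defined, which again follows from compactness of $\X$ and continuity of the Lipschitz envelopes.
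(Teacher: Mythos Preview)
Your proposal is correct and follows essentially the same approach as the paper: both use the explicit envelope forms, invoke tightness of the Bellman constraints at the data points (the paper via Lemma~\ref{lem:inequality2equality}, you via the fixed-point limit of Theorem~\ref{thm:monotonic}), combine this with the covering-radius estimate $\overline Q^\pi(x)-\underline Q^\pi(x)\le(\overline q_{i(x)}-\underline q_{i(x)})+2\eta\varepsilon_X$, and close the loop with a self-referential $\gamma$-contraction inequality. The only cosmetic difference is that the paper first derives the contraction for $\|\overline q-\underline q\|_\infty$ and then lifts it to $\|\overline Q^\pi-\underline Q^\pi\|_\infty$, whereas you run the self-referential inequality directly on $\Delta=\sup_x(\overline Q^\pi-\underline Q^\pi)$; your organization is slightly more direct but the argument is the same.
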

Typically, the covering radius in a bounded and compact domain asymptotically grows with $O(n^{-1/\tilde{d}})$, where $\tilde{d}$ is an intrinsic dimension of the domain \citep{cohen1985covering}, if the support of the data distribution (where $x_i$ is getting from) covers $d_\pi$, the stationary distribution of policy $\pi$.
This shows that the bound gets tighter as the number of samples get larger, but may decay slowly when the data domain has a very high intrinsic dimension. 
While it is possible to choose smaller space sets (such as RKHS) to obtain smaller gaps, it would sacrifice other properties such as capacity and simplicity.

\begin{algorithm*}[t] 
\caption{Lipschitz Value (Upper Bound) Iteration with Stochastic Update}  
\label{alg:lips}
\begin{algorithmic} 
\STATE {\bf Input}: Transition data $\D = \{s_i, a_i, s_i', r_i\}_{1\leq i\leq n}$;
discounted factor $\gamma$;
target policy $\pi$; 
distance function $d$.
Lipschitz constant $\eta$. Subsample size $n_B$.
\STATE {\bf Initialize}: $\overline{q}_{0,i} = \frac{1}{1-\gamma}(r_i + \gamma \eta \widehat{\E}_{x\sim \T^\pi(\cdot|x_i)}[d(x_i,x)]),~\forall i$, according to equation \eqref{equ:defq0}.
\FOR{iteration t}
\STATE Subsample a subset $S_t \subseteq \{1,2,...,n\}$ with $|S_t| = n_B$.
\STATE {\bf Update}: $\overline{q}_{t+1,i} = \min\{\overline{q}_{t,i}, \B^\pi \overline{Q}_{t}(x_i)\},~\forall i\in S_t$ and $\overline{q}_{t+1,i} = \overline{q}_{t,i},~\forall i\notin S_t$ where $\overline{Q}_{t}(x) = \min_{j\in S_t}\{\overline{q}_{t,j} + \eta d(x_j,x)\}$.
\ENDFOR
\STATE {\bf Return}: upper bound: $\overline{\Rpi_{\F}} = \widehat{\E}_{x\sim \mu_{0,\pi}}[\min_{i\in [n]}\{q_{T,i} + \eta d(x_i,x) \}]$.
\end{algorithmic} 
\end{algorithm*}

\section{Practical Considerations}

We discuss some practical concerns of Lipschitz value iteration in this section.
\subsection{Accelerating with Stochastic Subsampling}
\label{sec:subsampling}
If we draw $D$ samples to estimate $\B^\pi$ when updating with equation \eqref{eqn:lips_update}, we need $O(n^2 D)$ times of calculations for each round.
This $n^2$ comes from two sources, one is the updating of all $n$ terms of $q_{t,i}$ in each iteration, 
and the other is taking the maximum/minimum among all $n$ upper/lower curves when we calculate the upper/lower envelope function of $\overline{Q}_t(x)$. This is quadratic to the number of samples and therefore computationally expensive as the sample size grows large.

To tackle this problem, we propose a fast random subsampling technique. Instead of updating all $n$ $q_{t,i}$ in each iteration, we pick a batch of subsamples $\{\overline{q}_{t,i}\}_{i\in S}$ to update, where $S\in \{1,2,\ldots,n\}$ is a subset with fix size $|S| = n_B$.
The benefit of subsampling is that we can trade-off between time-complexity and tightness.

To be more precise, we can write down the new update scheme as follow:
\begin{align} 
\label{eqn:lips_update_subsample}
\begin{split} 
    &\overline{Q}_t(x) = \min_{j\in S_t}(\overline{q}_{t,j} + \eta d(x,x_j)),\\
    &\overline{q}_{t+1,i} = \min\{ \overline{q}_{t,i}, \B^\pi \overline{Q}_t(x_i),~\forall i\in S_t,\quad \overline{q}_{t+1,i} = \overline{q}_{t,i},~\forall i\notin S_t \},
\end{split}
\end{align}
where $S_t$ is the sub-sample set in the $t$ iteration.
A similar monotonic result can be shown in sequel.

\begin{pro}
\label{pro:monotonic_subsampling}
Consider the update in \eqref{eqn:lips_update_subsample} with initialization following \eqref{equ:defq0},
let $\overline{Q}_t$ be the upper envelope function of data points $\{x_i, \overline{q}_{t,i}\}_{i=1}^n$, we have a similar monotonic result as theorem~\ref{thm:monotonic}:
\begin{equation*}
    \overline{Q}_{t}\succeq \overline{Q}_{t+1} \succeq \overline{Q}^\pi,~\forall t= 0, 1,2, \ldots\,.
\end{equation*}
\end{pro}
We summarize the strategy in Algorithm \ref{alg:lips}.

The subsampling algorithm only needs $O(n_B^2 D)$ time complexity in each iteration.
Although this strategy does not converge to the exact optimal  solution $\overline{R^\pi_{\F}}$, it still gives valid (despite less tight) bounds.  
In the numerical experiments, we find that once the size of subsample set is sufficiently large, we get almost the same tightness as the exact optimal bounds.

\subsection{On Estimating Lipschitz Norm}
The only model assumption we need to specify is the function set $\F_{\eta}$; we want $\eta \geq  \|Q^{\pi}\|_{d,\mathrm{Lip}}$, but not to be too large, since the estimated interval gets loose as $\eta$ increases. 
However, compared to other value-based function approximation methods that also need assumptions on model specification, our non-parametric Lipschitz assumption is obviously very mild.

In order to set hyperparameter $\eta$, we would like to estimate the upper bound of $\|Q^\pi\|_{d,\mathrm{Lip}}$, which is typically non-identifiable purely from the data.
This is because, given a sufficiently large $\eta$, we can always find a function $Q$ that satisfies all the Bellman constraints with $\|Q\|_{d,\mathrm{Lip}} \geq \eta$ by twisting with a small function; see appendix \ref{sec:app-lips} for more details.
However, if we know or can estimate the Lipschitz norm of the reward and transition functions, then it is possible to derive a theoretical upper bound of $\|Q^\pi\|_{d,\mathrm{Lip}}$ with only mild assumptions.

\begin{figure*}[t]
    \centering
    \begin{tabular}{cccc}
    \raisebox{1.2em}{\rotatebox{90}{\scriptsize Relative log mean error}} 
    \hspace{-0.5em}
    \includegraphics[width = 0.22\textwidth]{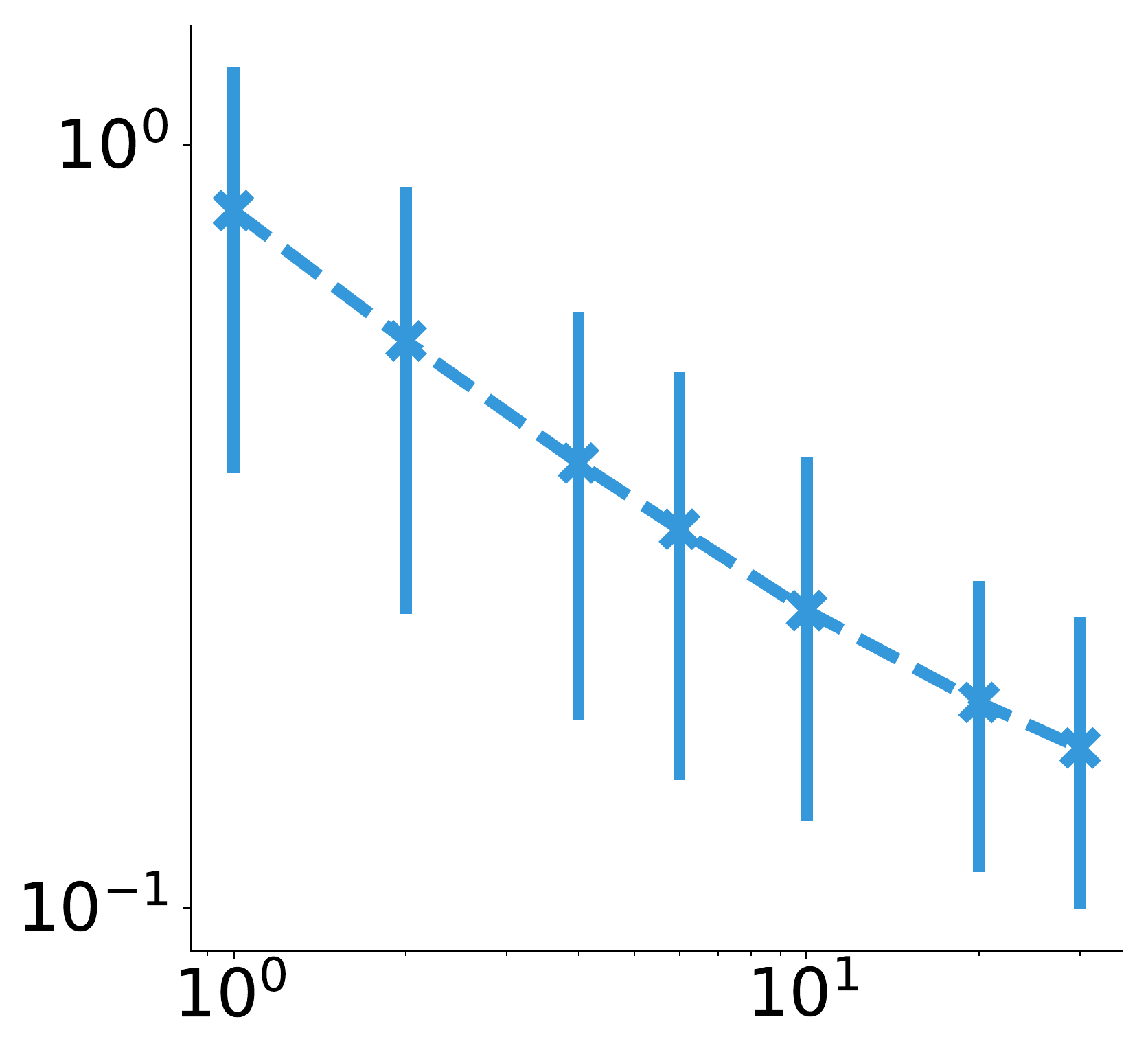}&
    \hspace{-0.5em}
    \raisebox{2.5em}{\rotatebox{90}{\scriptsize Relative Reward}} 
    \includegraphics[width = 0.22\textwidth]{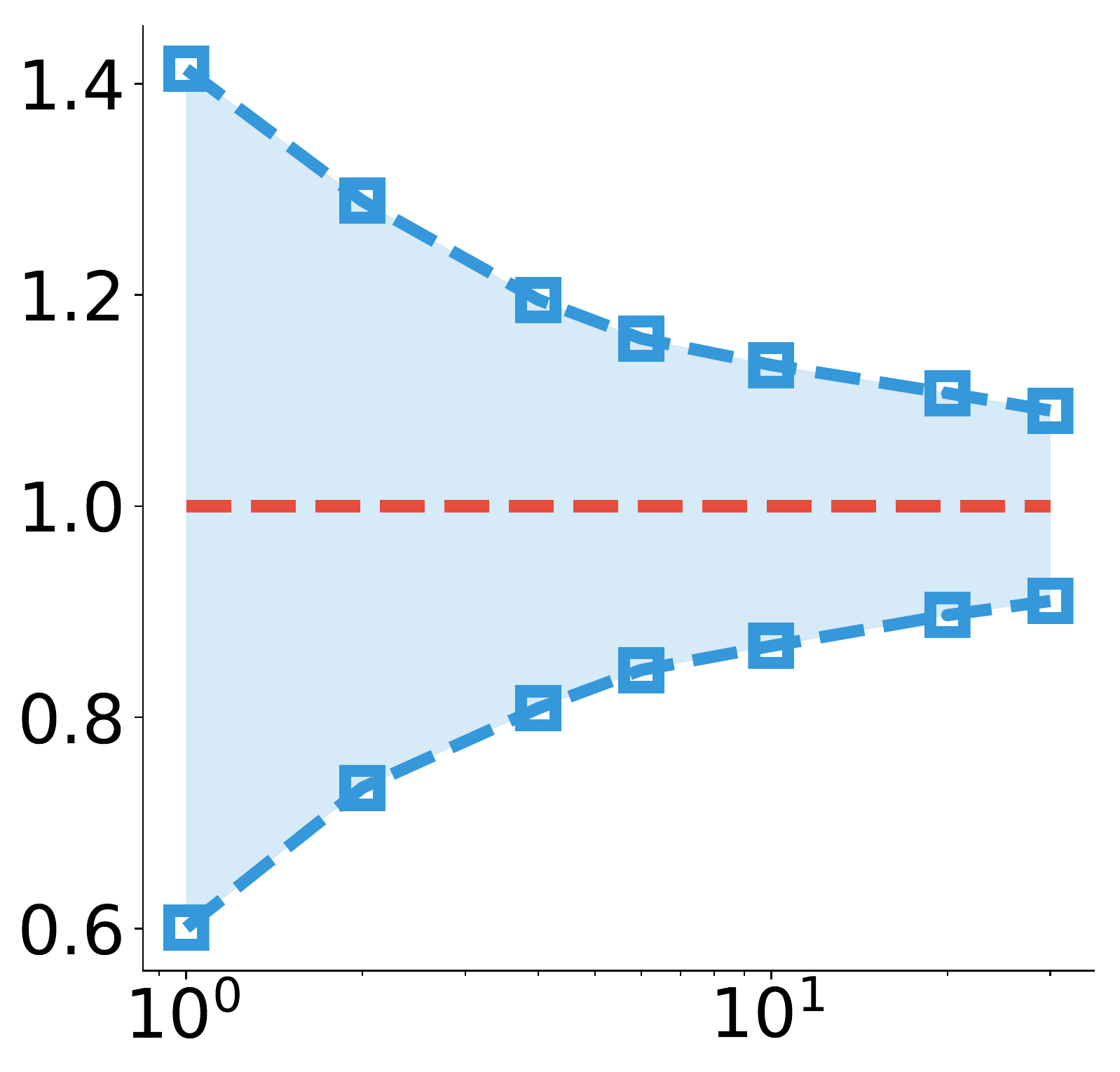}&
    \hspace{-0.5em}
    \includegraphics[width = 0.22\textwidth]{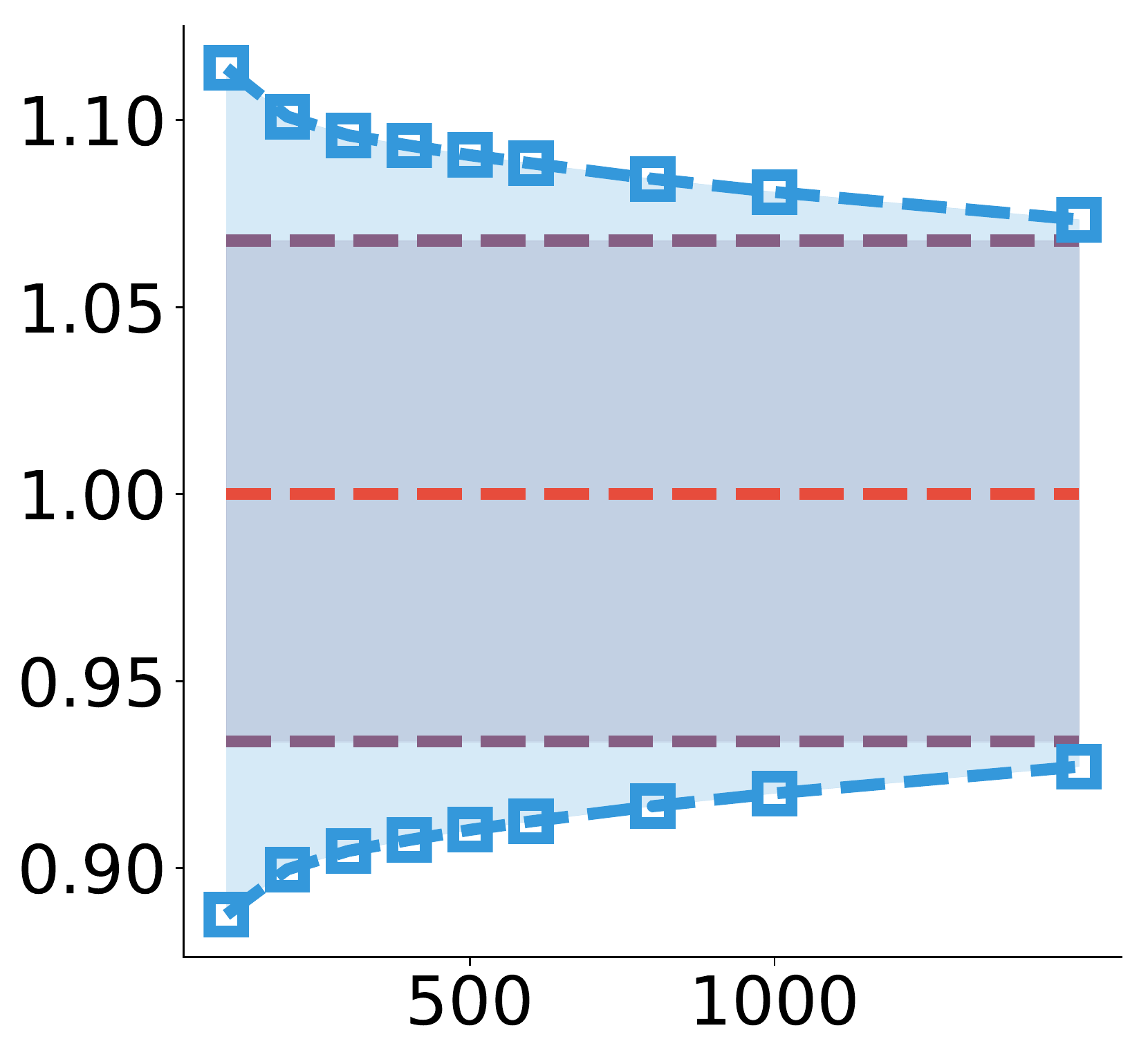}&
    \hspace{-0.5em}
    \includegraphics[width = 0.2\textwidth]{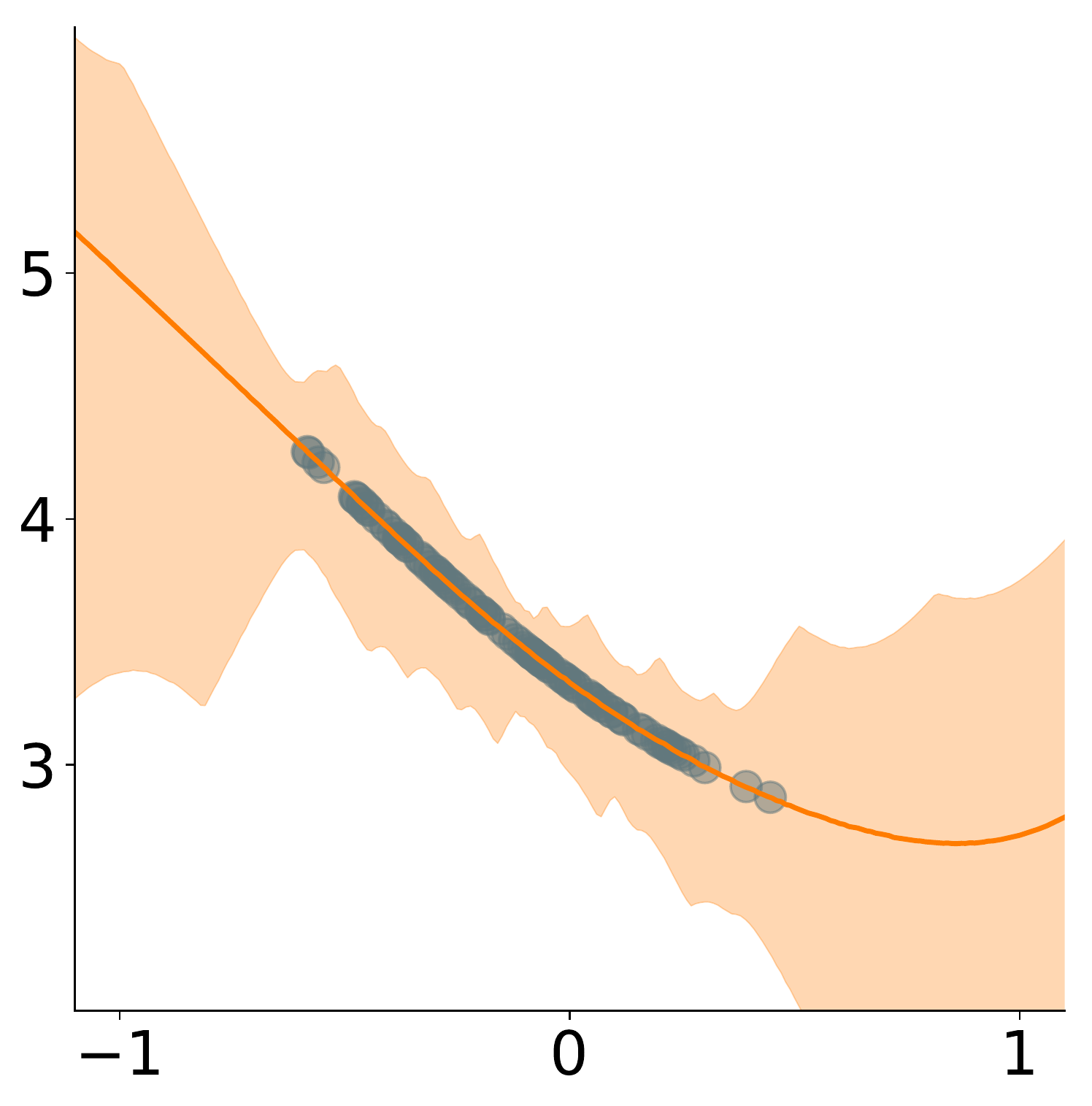}\\
    \small{number of trajectories $n_t$} & \small{number of trajectories $n_t$}  & \small{Subsample Size $n_B$} & \small{State $s$}\\
    (a) & (b) & (c)& (d)\\
    \end{tabular}
    \vspace{-0.8em}
    \caption{\small{Results for synthesis environment with a \textit{known} value function.
    The default settings: number of trajectory $n_t = 30,$ Horizon length $H = 100$, discounted factor $\gamma = 0.95$, Lipschitz constant $\eta = 2.0$ and subsample size $n_B = 500$.
    (a) y-axis: log of relative mean error $\log((\overline{R^\pi_\F} - \underline{R^\pi_\F})/\Rpi)$; (b)(c) y-axis: relative reward; (d) landscape for value function $V^\pi$ with $[\overline{V^\pi}, \underline{V^\pi}]$, here state is bounded in interval $[-1.2, 1.2]$, interval is estimated using 100 samples. Blue curves: subsamples bounds; purple lines: the whole samples bounds.
    }}
    \vspace{-1.3em}
    \label{fig:RL_toy}
\end{figure*}

\begin{pro}
\label{thm:Q_lips}
Let $\langle \Sset\times\Aset, d_x\rangle$ be a metric space for state action pair $x$ and $\langle \Sset, d_s\rangle$ be a metric space for state $s$.
Suppose $d_x$ is separable so that $d_x(x_1, x_2) = d_s(s_1, s_2)$ if $a_1 = a_2$.
If the reward function $r$ and the transition $\T$ are both Lipschitz in the sense that
\begin{align*}
    r(x_1) - r(x_2) \leq \|r\|_{\mathrm{Lip}} d_x(x_1,x_2),\quad
    d_s(\T(x_1), \T(x_2)) \leq \|\T\|_{\mathrm{Lip}} d_x(x_1, x_2),~\forall x_1, x_2. 
\end{align*}
We can prove that if $\gamma \|\T\|_{\mathrm{Lip}} < 1$, we have
\begin{equation}\label{eqn:lips_propagation}
    \|Q^\pi\|_{\mathrm{Lip}}\leq  \|r\|_{\mathrm{Lip}}/(1-\gamma \|\T\|_{\mathrm{Lip}}),
\end{equation}
when $\pi$ is a constant policy. 
Furthermore, for optimal policy $\pi^*$ with value function $Q^*$, we have:
\begin{equation}\label{eqn:lips_propagation_optimal}
    \|Q^*\|_{\mathrm{Lip}}\leq  \|r\|_{\mathrm{Lip}}/(1-\gamma \|\T\|_{\mathrm{Lip}}),
\end{equation}
\end{pro}

Theorem \ref{thm:Q_lips}
suggests that if our target policy is close to the optimal,
we can set $\eta = \frac{\|r\|_{\mathrm{Lip}}}{1-\gamma \|\T\|_{\mathrm{Lip}}}$ if we can estimate $\|r\|_{\mathrm{Lip}}$ and $\|\T\|_{\mathrm{Lip}}$. 
This provides a way for estimating the upper bound of the Lipschitz norm of $Q^\pi$ by leveraging the Lipschitz norm for the reward and transition functions.
In practice, we can estimate $\|r\|_{\mathrm{Lip}}$ and $\|\T\|_{\mathrm{Lip}}$ using historical data:
\begin{align}\label{eqn:estimate_lips1}
    \widehat{\|r\|}_{\mathrm{Lip}} = \max_{i\neq j} \frac{(r_i - r_j)}{d(x_i, x_j)}, && \widehat{\|\T\|}_{\mathrm{Lip}} = \max_{i\neq j} \frac{d_s(s_i', s_j')}{d_x(x_i, x_j)}. 
\end{align}

\paragraph{Diagnosing Model Misspecification from Data}
Since the empirical maximum tends to underestimate the true maximization,  %
simply using Proposition \ref{thm:Q_lips} may still underestimate the true Lipschitz norm.
Luckily, we can diagnose if $\eta$ is too small to be consistent with data by only adding a few lines of diagnosis codes.

From Theorem \ref{thm:monotonic} we know that for all $Q \in \F_{\eta}$, which are consistent with the finite sample Bellman equations, we have $\overline{Q}_t \succeq Q \succeq \underline{Q}_t$.
Thus, if at some time $t$ (or after convergence), we find that $\overline{Q}_t(x) < \underline{Q}_t(x)$ for some $x$, we can reject the following hypothesis:
$$
h: \exists Q\in \F_{\eta},~\mathrm{s.t.}~Q(x_i) = \B^\pi Q(x_i),~\forall i\in [n].
$$
In this way, we can see that $Q^\pi \notin \F_\eta$. Then, we can increase $\eta$ by a constant factor $\kappa > 1$ and rerun our upper/lower bound algorithm. 
Note that we do not need to compare an infinite number of $x$ to check $\overline{Q}_t(x) < \underline{Q}_t(x)$, as $\overline{Q}_t(x) = \min_{j}\{\overline{q}_{t,j} + \eta d(x,x_j)\}$ and $\underline{Q}_t(x) = \max_{j}\{\underline{q}_{t,j} - \eta d(x,x_j)\}$. And hence, it is sufficient to check if there exists an index $i$ such that $\overline{q}_{t,i} < \underline{q}_{t,i}$.

\begin{figure*}[t]
    \centering
    \begin{tabular}{cccc}
    \hspace{-0.6em}
    \raisebox{3.5em}{\rotatebox{90}{\scriptsize Relative Reward}} 
    \hspace{-0.5em}
    \includegraphics[width = 0.23\textwidth]{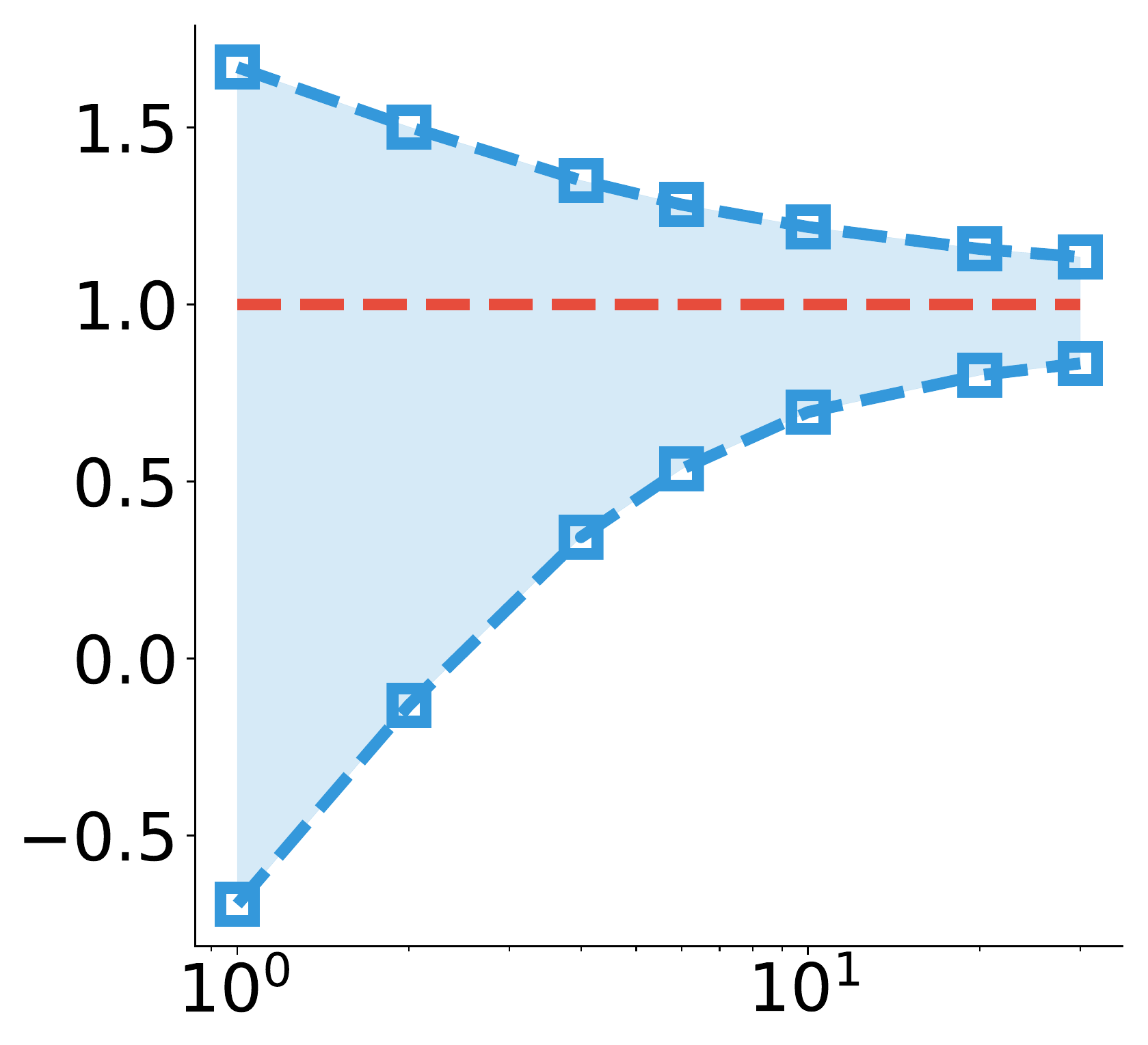}&
    \hspace{-0.6em}
    \includegraphics[width = 0.23\textwidth]{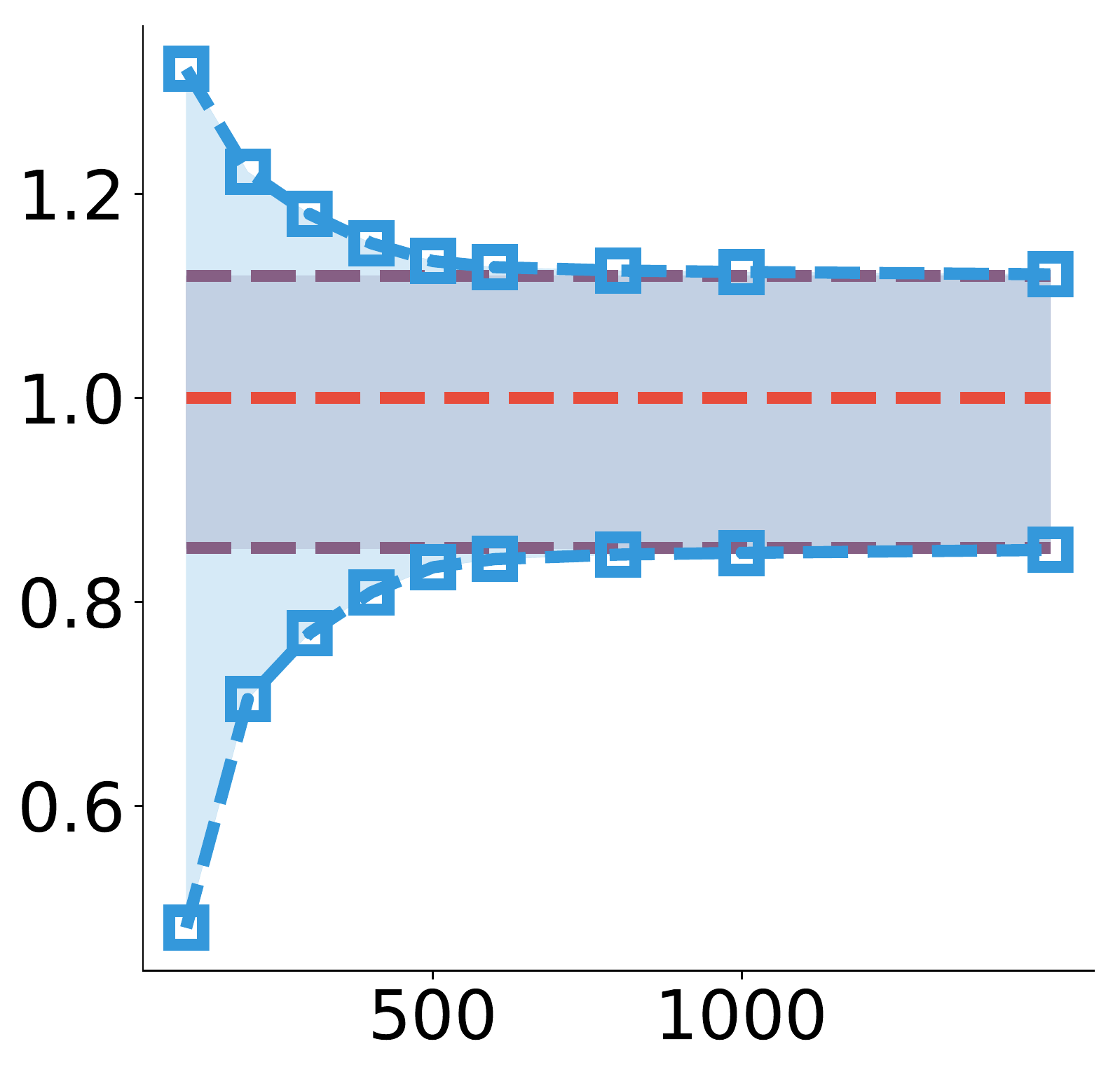}&
    \raisebox{3.5em}{\rotatebox{90}{\scriptsize Relative Reward}} 
    \hspace{-0.5em}
    \includegraphics[width = 0.23\textwidth]{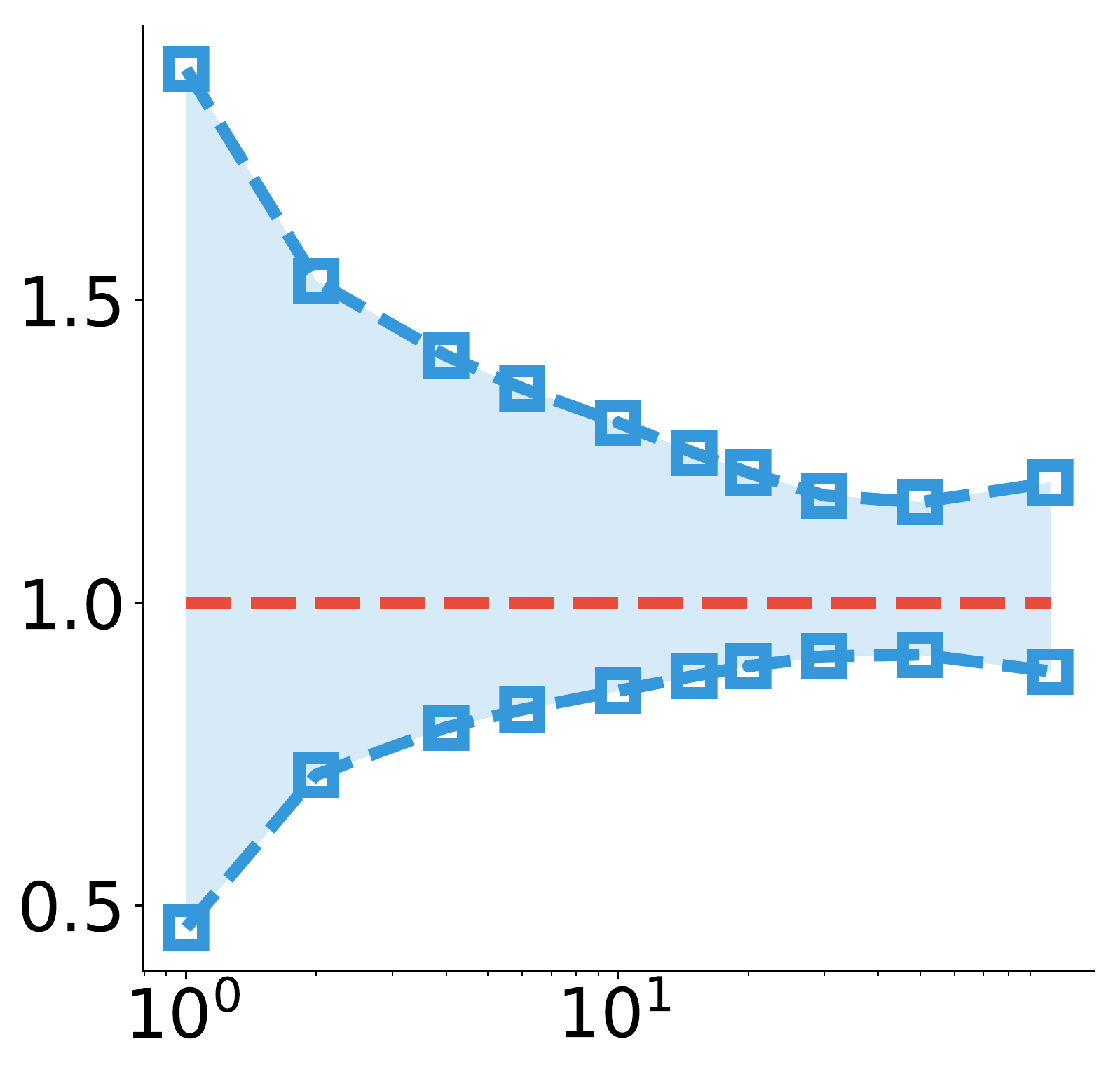}&
    \hspace{-0.6em}
    \includegraphics[width = 0.23\textwidth]{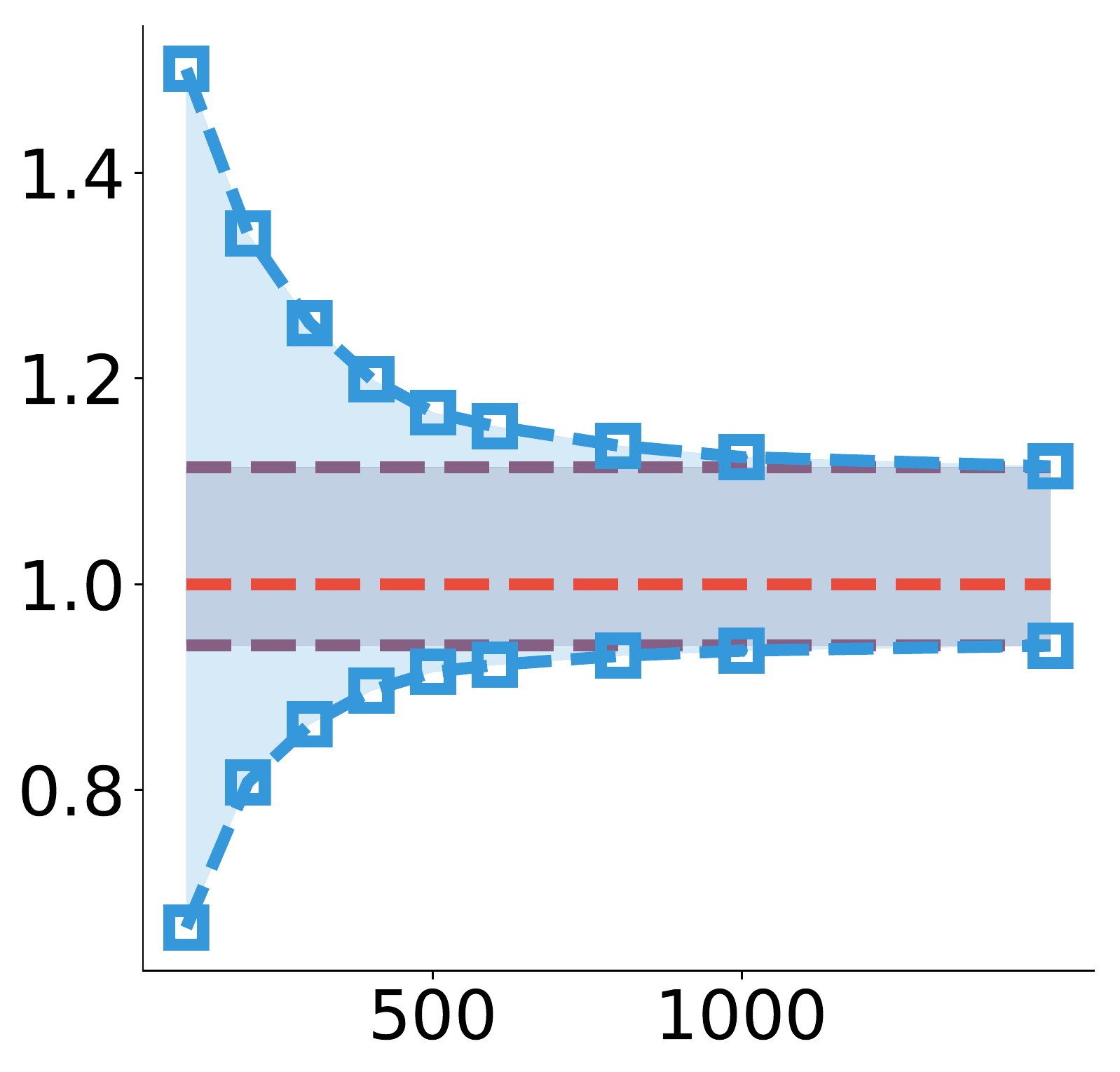}\\
    \small{number of trajectories $n_t$} & \small{Subsample Size $n_B$} & \small{number of trajectories $n_t$} & \small{Subsample Size $n_B$}\\
    (a) & (b) & (c) & (d)
    \end{tabular}
    \vspace{-1.3em}
    \caption{\small{Results for pendulum and HIV simulator.
    The default settings for pendulum (Figure a,b): $n_t = 30, H = 100, \gamma = 0.95, \eta = 10.0, n_B = 500$.
    The default settings for HIV (Figure c,d): $n_t = 50, H = 30, \gamma = 0.75, \eta = 40.0, n_B = 500$.
    We follow the similar experiments from Figure \ref{fig:RL_toy} (b) and (c).}
    }
    \vspace{-1.5em}
    \label{fig:RL_pendulum}
\end{figure*}

\section{Experiments}
\label{sec:experiments}
We test our algorithms in different environments.
We follow Algorithm $\ref{alg:lips}$ with sub-sampling technique.
In each environment, we evaluate the tightness of our bound 
by changing 
1) the number of samples $n$ and 
2) subsampling size $n_B$.
We also make a comparison with the exact bound with full sample size (i.e. $n=n_B$).
We start hyperparameter $\eta$ estimated by Proposition \ref{thm:Q_lips} with empirical maximal, and use diagnose algorithm in the last section to gradually increase $\eta$ until it is consistent with data set where we set the increasing factor $\kappa = 1.1$.
We observed that diagnosis algorithm usually passes with the very initial $\eta$.
As a baseline, \cite{thomas2015high} needs huge amounts of samples to get comparable tight bounds as ours, we demonstrate a comparison experiment in Appendix \ref{sec:experiment_details}.

\noindent\textbf{Synthetic Environment with A Known Value Function}\quad
We consider a simple environment with one dimension state space $\Sset = \RR$ and one dimension action space $\Aset = \RR$ with a linear transition function.
Given a target policy $\pi$, we enforce our value function $Q^\pi(s,a)$ to be our predefined function. 
This is done by enforcing the reward function $r$ for this environment as the reverse Bellman error of $Q^\pi$,
$
r_i = Q^\pi(x_i) - \gamma \Ppi Q^\pi(x_i),
$
with $\Ppi$ be the transition operator in equation \eqref{eqn:transition_op}.
We use Euclidean distance metric and under this metric we can prove that the Lipschitz constant is $2$. 
See Appendix \ref{sec:experiment_details} for more details.

All the reported results are average over 300 runs using the following setting by default: number of trajectory $n_t = 30,$ Horizon length $H = 100$, discounted factor $\gamma = 0.95$, Lipschitz constant $\eta = 2.0$ and subsample size $n_B = 500$.
We run Lipschitz value iteration for $100$ iteration to ensure almost convergence.
From Figure \ref{fig:RL_toy}(a)(b) we can see that as the number of sample size gets larger, the bound gets tighter.
\ref{fig:RL_toy}(c) indicates that with a sufficiently large subsample size, e.g. $n_B = 500$, we can achieve bounds accurately enough compared to whole sample algorithm (purple lines).
We also demonstrate the landscape of evaluation for state value function $\overline{V^\pi}(s) = \E_{a\sim \pi(\cdot|s)}[\overline{Q}_t(s,a)]$ and $\underline{V^\pi}(s)$ under the final Lipschitz value iteration for 100 data samples. 
Compare with the true value function, we can see that we get a tighter bound on a neighborhood region of data points compared to unseen region.

\noindent\textbf{Pendulum Environment}\quad
We demonstrate our method on pendulum, which is a continuous control environment with state space of $\RR^3$ and action space on interval $[-2,2]$.  
In this environment, we aim to control the pendulum to make it stand up as long as possible (for the large discounted case), or as fast as possible (for small discounted case). See Appendix \ref{sec:experiment_details} for more experimental setups.

Figure \ref{fig:RL_pendulum}(a)(b) shows a similar result indicating our interval estimation is tight and subsampling achieves almost same tightness with full samples.

\noindent\textbf{HIV Simulator}\quad
The HIV simulator described in \citet{ernst2006clinical} is a continuous state environment with 6 parameters and a discrete action environment with total 4 actions.
In this environment, we seek to find an optimal drug schedule given patient's 6 key HIV indicators.
The HIV simulator has richer dynamics than the previous two environments.
We follow \citet{liu2018representation} to learn a target policy by fitted Q iteration and use the $\epsilon$-greedy policy of the Q-function as the behavior policy.

The default setting is similar to the previous two experiments  but we use a relatively small discounted factor $\gamma = 0.75$ to ensure that we can get a reasonable Lipschitz constant from equation \eqref{eqn:lips_propagation}.
Figure \ref{fig:RL_pendulum}(c)(d) demonstrate a similar result of the HIV environment. 

\section{Conclusion}
We develop a general optimization framework for off-policy interval estimation and propose a value iteration style algorithm to monotonically tighten the interval.
Our Lipschitz value iteration on the continuous settings MDP enjoys nice convergence properties similar to the tabular MDP value iteration, which is worth further investigating.
Future directions include leveraging our interval estimation to encourage policy exploration or offline safe policy improvement. 

\paragraph{Broader Impact}
Off-policy interval evaluation not only can advise end-user to deploy new policy, but can also serve as an intermediate step for latter policy optimization.
Our proposed methods also fill in the gap of theoretical understanding of Markov structure in Lipschitz regression.
We current work stands as a contribution to the fundamental ML methodology, and we do not foresee potential negative impacts. 

\paragraph{Funding Transparency Statement}
This work is supported in part by NSF CAREER \#1846421, SenSE \#2037267, and EAGER \#2041327.

\bibliographystyle{icml2020}
\bibliography{ref}

\newpage\clearpage
\onecolumn
\appendix

\begin{center}
\Large
\textbf{Appendix}
\end{center}

\section{Lower Bound Results}
\label{sec:lower_bound}
We list all the results for the lower bound here.

\subsection{The Lower Bound Value Iteration}
Similar to upper bound, the general algorithm for lower bound iteratively finds the lower envelope of the previous points estimation $\underline{q}_{t,i}$.

\begin{align}
\label{equ:algupdate_lower}
\begin{split} 
&{\underline{Q}}_{t}  = \arg\min_{Q\in \F}\left \{
\zt{R_{\mu_{0,\pi}}}[Q],~~~~
\text{s.t.}~~~~Q(x_i) \geq \underline{q}_{t,i},
~~~~ \forall i \in [n]\right\} \\
& \underline{q}_{t+1,i} =\mathcal B^{\pi} \underline{Q}_t(x_i), ~~~~~\forall i \in [n].
\end{split}
\end{align}

For Lipschitz functions, this yields a simple closed form solution. 
\begin{pro}
\label{pro:lips_update_lower}
Suppose $\mu_{0,\pi}$ is a full-support distribution over $\Sset\times \Aset$. 
Consider the optimization in \eqref{equ:algupdate_lower} with $\mathcal F = \mathcal F_\eta$ in \eqref{equ:defLip}. We have 
\begin{align} 
\label{eqn:lips_update_lower_appendix}
\begin{split} 
    & \underline{Q}_{t}(x) = \max_{j\in[n]}(\underline{q}_{t,j} - \eta d(x, x_j)),  \\
    &\underline{q}_{t+1,i} = \mathcal B^\pi \underline{Q}_t(x_i), ~~~~\forall i \in [n]\qq{.}
    \end{split}
\end{align}
\end{pro}

\subsection{Convergence Results}

Similar to Theorem \ref{thm:monotonic}, we have a similar monotonic result for lower bound case.
\begin{thm}
\label{thm:monotonic_lower}
Following the update in \eqref{eqn:lips_update_lower} starting from 
\begin{align}\label{equ:defq0_lower}
    \underline{q}_{0,i} = \frac{1}{1-\gamma}\left(r_i - \gamma \eta\E_{x_i'\sim \T^\pi(\cdot|x_i)}[d(x_i, x_i')]\right), 
\end{align}
we have
\begin{equation}
    \underline{Q}_{t}\preceq \underline{Q}_{t+1} \preceq \underline{Q}^\pi,~\forall t= 0, 1,2, \ldots,
\end{equation}
where $\underline{Q}^\pi = \arg\min_{Q\in \F}\{R[Q],~\text{s.t. } Q(x_i) \geq \B^\pi Q(x_i),~\forall i\in[n]\}$.
Therefore, 
$$
\zt{R_{\mu_{0,\pi}}}[\underline{Q}_{t}] \leq \zt{R_{\mu_{0,\pi}}}[\underline{Q}_{t+1}]  \leq  \underline{R^\pi_{\F}}\leq R^\pi, 
$$ 
and $\lim_{t\to \infty}\zt{R_{\mu_{0,\pi}}}[\underline{Q}_{t}] = \underline{R^\pi_{\F}}$.
\end{thm}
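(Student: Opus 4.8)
The plan is to mirror the proof of the upper-bound Theorem~\ref{thm:monotonic}, reversing every inequality, since the lower-envelope update in \eqref{eqn:lips_update_lower} is the exact reflection of the upper-envelope update. Two monotonicity facts drive everything. First, the lower-envelope map is \emph{monotone in its node values}: if $\underline{q}_i \le \underline{q}'_i$ for all $i$, then $\max_i(\underline{q}_i - \eta d(x,x_i)) \le \max_i(\underline{q}'_i - \eta d(x,x_i))$ pointwise. Second, the Bellman operator is monotone: $Q \preceq Q'$ implies $\B^\pi Q = r + \gamma\Ppi Q \preceq r + \gamma \Ppi Q' = \B^\pi Q'$, because $\Ppi$ is an expectation against a fixed transition kernel. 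With these in hand I would prove three things in order: (i) the iterates are monotonically increasing, (ii) every iterate lies below any feasible point, in particular below $\underline{Q}^\pi$, and (iii) $R[\underline{Q}_t]$ converges up to $\underline{\Rpi_\F}$.

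For (i), the only place the specific initialization \eqref{equ:defq0_lower} is used is the base case $\underline{Q}_0 \preceq \underline{Q}_1$. I would expand $\underline{q}_{1,i} = \B^\pi \underline{Q}_0(x_i) = r_i + \gamma\E_{x'\sim \T^\pi(\cdot|x_i)}[\underline{Q}_0(x')]$ and use the Lipschitz lower bound $\underline{Q}_0(x') \ge \underline{q}_{0,i} - \eta d(x_i,x')$ to obtain $\underline{q}_{1,i} \ge r_i + \gamma\underline{q}_{0,i} - \gamma\eta\E[d(x_i,x')]$; the choice of $\underline{q}_{0,i}$ makes the right-hand side exactly $\underline{q}_{0,i}$, so $\underline{q}_{1,i}\ge \underline{q}_{0,i}$ and envelope monotonicity gives $\underline{Q}_0\preceq\underline{Q}_1$. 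The inductive step is purely structural: if $\underline{Q}_{t-1}\preceq\underline{Q}_t$, then $\underline{q}_{t,i}=\B^\pi\underline{Q}_{t-1}(x_i)\le\B^\pi\underline{Q}_t(x_i)=\underline{q}_{t+1,i}$ by Bellman monotonicity, and envelope monotonicity upgrades this to $\underline{Q}_t\preceq\underline{Q}_{t+1}$.

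For (ii), I would show $\underline{Q}_t \preceq Q$ for \emph{any} $Q\in\F_\eta$ feasible for the lower-bound program, i.e.\ $Q(x_i)\ge\B^\pi Q(x_i)$ for all $i$. The base case again uses \eqref{equ:defq0_lower}: feasibility plus $Q(x')\ge Q(x_i)-\eta d(x_i,x')$ give $(1-\gamma)Q(x_i)\ge r_i-\gamma\eta\E[d(x_i,x')]$, hence $Q(x_i)\ge\underline{q}_{0,i}$, and the envelope bound $Q(x)\ge\max_j(Q(x_j)-\eta d(x,x_j))$ yields $Q\succeq\underline{Q}_0$. Inductively, $\underline{Q}_t\preceq Q$ gives $\underline{q}_{t+1,i}=\B^\pi\underline{Q}_t(x_i)\le\B^\pi Q(x_i)\le Q(x_i)$, and the envelope bound closes the step. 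Taking $Q=\underline{Q}^\pi$ combines with (i) to give $\underline{Q}_t\preceq\underline{Q}_{t+1}\preceq\underline{Q}^\pi$. Applying $R[\cdot]$, which is monotone because it integrates against the positive measure $\mu_{0,\pi}$, yields the displayed chain, while $\underline{\Rpi_\F}\le\Rpi$ follows because $Q^\pi$ is feasible (it satisfies the Bellman equality, hence the inequality) with $R[Q^\pi]=\Rpi$.

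For (iii), the monotone bounded sequence $\underline{q}_{t,i}$ has a limit $\underline{q}_{\infty,i}$, and the finite-max envelopes converge up to $\underline{Q}_\infty(x)=\max_j(\underline{q}_{\infty,j}-\eta d(x,x_j))\preceq\underline{Q}^\pi$. Passing to the limit in $\underline{q}_{t+1,i}=r_i+\gamma\E_{x'}[\underline{Q}_t(x')]$ via the monotone convergence theorem gives $\underline{q}_{\infty,i}=\B^\pi\underline{Q}_\infty(x_i)$; since $\underline{Q}_\infty(x_i)\ge\underline{q}_{\infty,i}$ by construction, $\underline{Q}_\infty$ is feasible, so $R[\underline{Q}_\infty]\ge\underline{\Rpi_\F}$, while $\underline{Q}_\infty\preceq\underline{Q}^\pi$ forces $R[\underline{Q}_\infty]\le\underline{\Rpi_\F}$; equality and the monotone convergence of $R[\underline{Q}_t]$ then give $\lim_t R[\underline{Q}_t]=\underline{\Rpi_\F}$. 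The main obstacle is this last step: everything else is bookkeeping with the two monotone maps, but establishing that the pointwise limit is itself feasible and attains the optimum requires the interchange of limit and expectation and the verification of the fixed-point property at the data points, which is where the argument is genuinely analytic rather than algebraic.
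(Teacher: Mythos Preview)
Your proof is correct. Part (i) matches the paper exactly: the paper verifies the base case $\underline q_{0,i}\le \underline q_{1,i}$ via the same computation (Lemma~\ref{lem:Lips_superior_init}(ii)) and runs the same induction through envelope monotonicity (Lemma~\ref{lem:point2function}) and Bellman monotonicity.

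Parts (ii) and (iii), however, follow a genuinely different route. The paper does \emph{not} prove a direct comparison $\underline Q_t\preceq Q$ for every feasible $Q$; instead it argues (mirroring Theorem~\ref{thm:monotonic_general}) that the monotone sequence converges to some $\underline Q_\infty$, that $\underline Q_\infty$ satisfies the lower-envelope Bellman \emph{equation} $\underline Q_\infty=\loenv_\F(\{x_i,\B^\pi\underline Q_\infty(x_i)\})$, and then invokes Lemma~\ref{lem:inequality2equality} to identify $\underline Q_\infty$ with $\underline Q^\pi$ as the \emph{unique} fixed point of that equation, using the contractivity of the envelope operator. Your argument replaces this fixed-point/uniqueness machinery by the elementary sandwich: from (ii) you get $R[\underline Q_\infty]\le\underline{R^\pi_\F}$, and from feasibility of $\underline Q_\infty$ you get $R[\underline Q_\infty]\ge\underline{R^\pi_\F}$. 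This is more self-contained and works entirely with the explicit Lipschitz envelope formula, whereas the paper's argument is phrased for any lower-self-contained class with contractive envelope and thus buys generality at the cost of the extra lemma. A side benefit of your route is that the bound $\underline Q_t\preceq Q$ (any feasible $Q$) supplies the upper bound needed for convergence of the monotone sequence, a point the paper's proof asserts (``Cauchy sequence with a lower bound'') without making explicit. One presentational nit: in (ii) you invoke $\underline Q^\pi$ before its existence is secured; since your induction actually gives $\underline Q_t\preceq Q$ for every feasible $Q$, state it that way and let existence of the argmin fall out of (iii).
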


Similar to the linear convergence property of the upper bound case, we can establish a fast linear convergence rate for 
the updates in \eqref{eqn:lips_update_lower}. 
\begin{pro}
\label{pro:convergence_rate_lower}
Following the updates in \eqref{eqn:lips_update_lower}, we have  
$$
 \underline{\Rpi_\F} - \zt{R_{\mu_{0,\pi}}}[\underline{Q}_t] \leq C \frac{\gamma^t}{1-\gamma},
$$
with constant $C := \max_{i\in [n]}|\underline{q}_{1,i} - \underline{q}_{0,i}|$. 
\end{pro}

\section{Proofs}
\label{sec:appendix-proof}
We focus on the proofs for upper bounds, 
all the lower bound proofs follows similarly. 

We establish the monotonic convergence of the 
 iterative update in section \ref{sec:alg_framework}. 
 We start with the result for general function spaces $\F$ and then apply it to the case of Lipschitz functions space, 
 where $\F$ can ensure that the optimization in \eqref{equ:algupdate} is solved by a properly defined upper envelope function.

\newcommand{\upenv}{\mathrm{\overline{ENV}}}
\newcommand{\loenv}{\mathrm{\underline{ENV}}}

\begin{mydef}\label{def:super}
Given a function space $\F$ on domain $\Omega$ 
and a set of data points $(x_i,f_i)_{i=1}^n\subseteq \Omega \times \RR$, %
we define the upper envelope function $g:\Omega\to \RR$  
of data points $(x_i, f_i)_{i=1}^n$ on $\mathcal F$ to be 
$$
g(x) = \upenv_{\F}(\{x_i,f_i\})(x) := \sup_{f\in \F}\{f(x):~\text{s.t.}~~ f(x_i)\leq f_i,~~\forall i \in [n]\},~\forall x\in \Omega. $$
We say that $\mathcal F$ is upper-self-contained
if it is closed (under the infinity norm), 
and the upper envelope function $g:\Omega\to R$ is contained in $\F$ for 
any data points $(x_i, f_i)$  
that satisfies  $\inf_{f\in \F} \{f(x_i)\} \leq f_i,~\forall i\in [n]$. 

Similarly,
we define the lower envelope function $g:\Omega\to \RR$  
of data points $(x_i, f_i)_{i=1}^n$ on $\mathcal F$ to be 
$$
g(x) = \loenv_{\F}(\{x_i,f_i\})(x):= \inf_{f\in \F}\{f(x):~\text{s.t.}~~ f(x_i)\geq f_i,~~\forall i \in [n]\},~\forall x\in \Omega. $$
We say that $\mathcal F$ is lower-self-contained
if it is closed (under the infinity norm), 
and the lower envelope function $g:\Omega\to R$ is contained in $\F$ for 
any data points $(x_i, f_i)$  
that satisfies  $\sup_{f\in \F} \{f(x_i)\} \geq f_i,~\forall i\in [n]$. 
\end{mydef}

Similar to contractive operator proof in value iteration, we also define the contractive property of upper/lower envelope operator $\upenv_{\F}$ and $\loenv_{\F}$.

\begin{mydef}
We say $\upenv_{\F}$ and $\loenv_{\F}$ are contractive if for two different sets of points data $\{x_i, p_i\}_{i=1}^{n}$ and $\{x_i,q_i\}_{i=1}^n$, we have, 
$$
\|\upenv_{\F}(\{x_i, p_i\}_{i=1}^{n}) - \upenv_{\F}(\{x_i, q_i\}_{i=1}^{n})\|_{\infty}\leq \max_{i\in [n]} |p_i - p_i|.\qq{,}
$$
and
$$
\|\loenv_{\F}(\{x_i, p_i\}_{i=1}^{n}) - \loenv_{\F}(\{x_i, q_i\}_{i=1}^{n})\|_{\infty}\leq \max_{i\in [n]} |p_i - p_i|.
$$
\end{mydef}

The following lemmas provide key
properties for of this special function class.
If $\F$ is upper-self-contained, then the optimization in \eqref{equ:algupdate} is %
solved by the upper envelop function defined above. 
And similarly, if $\F$ is lower-self-contained, then the optimization in \eqref{equ:algupdate_lower} is %
solved by the lower envelop function defined above. 
\begin{lem}
\label{lem:upper-contained}
If $\F$ is upper-self-contained/lower-self-contained, then 
$\overline{Q}_t$(resp. $\underline{Q}_t$)
in \eqref{equ:algupdate}(resp. \eqref{equ:algupdate_lower})
is equal to upper(resp. lower) envelope function of data points $(x_i, \overline{q}_{t,i}$(resp. $\underline{q}_{t,i})_{i=1}^n$) almost everywhere.
\end{lem}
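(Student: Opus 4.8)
The plan is to show that the upper envelope $g := \upenv_{\F}(\{x_i,\overline{q}_{t,i}\})$ is itself a solution of the constrained maximization in \eqref{equ:algupdate}, and that every solution $\overline{Q}_t$ agrees with it up to a $\mu_{0,\pi}$-null set. I would organize the argument around the two defining features of $g$: that it is feasible and lies in $\F$, and that it dominates every feasible competitor pointwise. For feasibility, note that by construction $g(x) = \sup_{f\in\F}\{f(x): f(x_j)\leq \overline{q}_{t,j},\ \forall j\}$, and since every feasible $f$ obeys $f(x_i)\leq \overline{q}_{t,i}$, taking the supremum preserves this inequality, giving $g(x_i)\leq \overline{q}_{t,i}$ for all $i$; thus $g$ satisfies the constraints of \eqref{equ:algupdate}. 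That $g\in\F$ is exactly the upper-self-contained hypothesis, once one checks the data $(x_i,\overline{q}_{t,i})$ satisfy $\inf_{f\in\F} f(x_i)\leq \overline{q}_{t,i}$ (trivial for $\F_\eta$, since one may subtract a constant from any admissible function). Hence $g$ is an admissible candidate.

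Next I would establish pointwise dominance: for every feasible $Q\in\F$ one has $Q(x)\leq g(x)$ for all $x$, simply because such a $Q$ is one of the functions entering the supremum defining $g$. Integrating against the full-support distribution $\mu_{0,\pi}$ and using monotonicity of expectation yields $R_{\mu_{0,\pi}}[g]\geq R_{\mu_{0,\pi}}[Q]$. Since $g$ is itself feasible, this shows $g$ attains the maximum, so $g$ solves \eqref{equ:algupdate} and the argmax set is nonempty and contains $g$.

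Finally, for the almost-everywhere identification of $\overline{Q}_t$ with $g$, I would exploit that the objective depends on $Q$ only through its $\mu_{0,\pi}$-expectation. Any maximizer $\overline{Q}_t$ is feasible, hence $\overline{Q}_t\preceq g$ pointwise by the dominance step, while optimality forces $R_{\mu_{0,\pi}}[\overline{Q}_t]=R_{\mu_{0,\pi}}[g]$. Therefore $\E_{\mu_{0,\pi}}[g-\overline{Q}_t]=0$ with nonnegative integrand $g-\overline{Q}_t\geq 0$, so $g=\overline{Q}_t$ holds $\mu_{0,\pi}$-almost everywhere; the full-support assumption upgrades this to an almost-everywhere statement over the whole domain $\Sset\times\Aset$. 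The lower-envelope case is symmetric, replacing suprema by infima, $\preceq$ by $\succeq$, and using $\loenv_{\F}$ in \eqref{equ:algupdate_lower}.

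The main obstacle I expect is not the dominance argument, which is essentially definitional, but pinning down the precise sense of equality: because $R_{\mu_{0,\pi}}$ cannot distinguish functions differing on a $\mu_{0,\pi}$-null set, pointwise equality of $\overline{Q}_t$ and $g$ need not hold, and the cleanest correct statement is the ``almost everywhere'' one, which is exactly why the lemma is phrased that way. The other delicate point is verifying the feasibility condition in the definition of upper-self-containedness so that $g\in\F$; this is the step where the concrete structure of $\F$ (here the Lipschitz ball $\F_\eta$, whose envelope is the explicit $\min_j\{\overline{q}_{t,j}+\eta d(x,x_j)\}$ used in Proposition~\ref{pro:lips_update}) actually enters.
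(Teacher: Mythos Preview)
Your proposal is correct and follows essentially the same approach as the paper: the envelope lies in $\F$ by the upper-self-contained hypothesis, pointwise dominates every feasible competitor by definition, and hence attains the maximum of $R_{\mu_{0,\pi}}[\cdot]$; the almost-everywhere identification then follows from the nonnegative integrand having zero expectation under the full-support $\mu_{0,\pi}$. The paper's own proof is a terse two-sentence version of this, so your write-up is in fact more complete---in particular, you make explicit the a.e.\ step and the role of full support, which the paper leaves implicit.
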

\begin{proof}
    The upper envelope and lower envelope is inside the function space, and is maximized(resp. minimized) 
    for all data points. 
    Therefore they are the solutions to equation \eqref{equ:algupdate} and \eqref{equ:algupdate_lower} almost everywhere, respectively.
\end{proof}

In addition, the upper and lower envelope functions is monotonic w.r.t. the data labels it goes through. 
\begin{lem}
\label{lem:point2function}
In an upper-self-contained function space $\F$, suppose we have two sets of data points $(x_i, f_i)_{i=1}^n$ and $(x_i, g_i)_{i=1}^n$, and $\overline{f}$ and $\overline{g}$ are their upper envelope functions respectively, $\underline{f}$ and $\underline{g}$ are their lower envelopes functions respectively,
if $f_i \geq g_i,~\forall i\in [n]$, then we have 
$
\overline{f} \succeq \overline{g}, \quad \underline{f} \succeq \underline{g}.
$
\end{lem}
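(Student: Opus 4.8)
The plan is to reduce both claims to the elementary fact that a supremum (resp.\ infimum) taken over a larger (resp.\ smaller) feasible set can only be larger. The key observation is that each envelope is defined pointwise as a constrained optimization over $\F$, and that raising the data labels $f_i$ changes the corresponding feasible sets in a monotone way.

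First I would handle the upper envelope. Recall $\overline{f}(x) = \sup_{h\in\F}\{h(x):~h(x_i)\le f_i,~\forall i\in[n]\}$ and similarly $\overline{g}$ with labels $g_i$. Fix an arbitrary $x\in\Omega$ and compare the feasible sets $A_f := \{h\in\F : h(x_i)\le f_i,~\forall i\}$ and $A_g := \{h\in\F : h(x_i)\le g_i,~\forall i\}$. Any $h\in A_g$ satisfies $h(x_i)\le g_i\le f_i$ for every $i$, using the hypothesis $g_i\le f_i$, so $h\in A_f$ and hence $A_g\subseteq A_f$. Since $\overline{f}(x)$ is the supremum of $h(x)$ over the larger set $A_f$, it dominates the supremum over $A_g$, giving $\overline{f}(x)\ge\overline{g}(x)$. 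As $x$ was arbitrary, $\overline{f}\succeq\overline{g}$.

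Next I would treat the lower envelope, where the set inclusion reverses. Here $\underline{f}(x) = \inf_{h\in\F}\{h(x):~h(x_i)\ge f_i,~\forall i\}$, so raising the labels makes the constraint $h(x_i)\ge f_i$ \emph{more} restrictive. Thus $B_f := \{h\in\F : h(x_i)\ge f_i,~\forall i\}$ is contained in $B_g := \{h\in\F : h(x_i)\ge g_i,~\forall i\}$, because any $h\in B_f$ obeys $h(x_i)\ge f_i\ge g_i$. Taking an infimum over the smaller set $B_f\subseteq B_g$ can only raise the value, so $\underline{f}(x)\ge\underline{g}(x)$ for every $x$, i.e.\ $\underline{f}\succeq\underline{g}$.

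The argument is genuinely elementary once phrased through these nested feasible sets, so I do not anticipate a serious obstacle; the only point requiring care is well-definedness of the envelopes. Feasibility of the constrained problems, so that the sup and inf are taken over nonempty sets and the envelopes actually lie in the function class, is precisely what the upper-/lower-self-contained hypothesis provides through Lemma~\ref{lem:upper-contained}. I would invoke that result to guarantee the optimizations defining $\overline{f},\overline{g},\underline{f},\underline{g}$ are well-posed members of $\F$, so that the pointwise comparisons above are valid and the stated dominations hold.
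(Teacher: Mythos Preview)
Your proof is correct and follows essentially the same approach as the paper's: both arguments observe that the feasible set $\{h\in\F: h(x_i)\le g_i\}$ is contained in $\{h\in\F: h(x_i)\le f_i\}$ when $g_i\le f_i$, so the supremum over the larger set dominates, and symmetrically for the lower envelope. Your treatment is more detailed (explicitly handling both envelope cases and commenting on well-definedness), but the core idea is identical.
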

    \begin{proof} 
    This is directly from the definition,
    \begin{align*}
        \overline{f}(x) =& \max_{f\in \F}\{f(x):~\text{s.t.} f(x_i) \leq f_i\} \\
        \geq& \max_{f\in \F}\{f(x):~\text{s.t.} f(x_i) \leq g_i\} \\
        =& \overline{g}(x),
    \end{align*}
    where the first inequality holds because the feasible region of constraints $f(x_i) \leq f_i$ is more general than $f(x_i) \leq g_i$ when $f_i\geq g_i$.
    
    the proof for the lower envelope works similarly.
    \end{proof}

\begin{lem}\label{lem:inequality2equality}
For a bounded upper-self-contained function class $\F$, if the upper envelope operator is contractive, then the maximum solution $\overline{Q}^\pi$ for the optimization framework equation \eqref{eqn:upper_bound_opt_framework} is the unique solution of the following upper-envelope Bellman equation:
\begin{align}\label{eqn:upper-envelope-Bellman}
\begin{split}
    Q_i =& \B^\pi Q(x_i),~\forall i\in[n],\\
    Q =& \upenv(\{x_i, Q_i\}_{i=1}^n).
\end{split}
\end{align}
\end{lem}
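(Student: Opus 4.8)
The plan is to recast the upper-envelope Bellman equation \eqref{eqn:upper-envelope-Bellman} as a single fixed-point equation for the operator $T\colon Q \mapsto \upenv_{\F}(\{x_i,\, \B^\pi Q(x_i)\}_{i=1}^n)$. Eliminating the auxiliary variables $Q_i = \B^\pi Q(x_i)$, a function $Q$ solves \eqref{eqn:upper-envelope-Bellman} if and only if $Q = T(Q)$. Note that one step of the iteration \eqref{equ:algupdate} is exactly an application of $T$, since $\overline{Q}_t = \upenv_{\F}(\{x_i, \overline{q}_{t,i}\})$ (Lemma \ref{lem:upper-contained}) and $\overline{q}_{t+1,i} = \B^\pi \overline{Q}_t(x_i)$ give $\overline{Q}_{t+1} = T(\overline{Q}_t)$. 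I would then establish the statement in two independent pieces: (i) the optimizer $\overline{Q}^\pi$ is a fixed point of $T$, and (ii) $T$ has at most one fixed point.

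For existence, I would show that the optimizer must saturate against its own upper envelope. Writing $q_i^{\ast} := \B^\pi \overline{Q}^\pi(x_i)$ and $\tilde Q := T(\overline{Q}^\pi) = \upenv_{\F}(\{x_i, q_i^{\ast}\})$, feasibility of $\overline{Q}^\pi$ gives $\overline{Q}^\pi(x_i) \le q_i^{\ast}$, so the data $\{x_i, q_i^{\ast}\}$ meet the admissibility condition of Definition \ref{def:super} and hence $\tilde Q \in \F$ by the upper-self-contained assumption. The same inequality makes $\overline{Q}^\pi$ a competitor in the envelope supremum, so $\tilde Q \succeq \overline{Q}^\pi$ (Lemma \ref{lem:point2function}). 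Monotonicity of $\B^\pi$, inherited from the averaging operator $\Ppi$, then yields $\tilde Q(x_i) \le q_i^{\ast} = \B^\pi \overline{Q}^\pi(x_i) \le \B^\pi \tilde Q(x_i)$, so $\tilde Q$ is itself feasible for \eqref{eqn:upper_bound_opt_framework}. Since $\tilde Q \succeq \overline{Q}^\pi$ pointwise while $\overline{Q}^\pi$ maximizes $R[\cdot]$, we must have $R[\tilde Q] = R[\overline{Q}^\pi]$; because $\mu_{0,\pi}$ has full support and both functions are continuous, pointwise domination with equal integral forces $\tilde Q = \overline{Q}^\pi$, i.e.\ $\overline{Q}^\pi = T(\overline{Q}^\pi)$.

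For uniqueness I would use only the contraction inequality, sidestepping any completeness technicality. The contractivity assumption on $\upenv_{\F}$ gives $\|T(Q) - T(Q')\|_\infty \le \max_i |\B^\pi Q(x_i) - \B^\pi Q'(x_i)|$ for any $Q, Q'$; and since $\B^\pi Q - \B^\pi Q' = \gamma\, \Ppi(Q - Q')$ with $\Ppi$ an average over next states, $\max_i |\B^\pi Q(x_i) - \B^\pi Q'(x_i)| \le \gamma \|Q - Q'\|_\infty$, so $T$ is a $\gamma$-contraction in the sup norm. If $Q_1 = T(Q_1)$ and $Q_2 = T(Q_2)$ are two solutions, then $\|Q_1 - Q_2\|_\infty = \|T(Q_1) - T(Q_2)\|_\infty \le \gamma \|Q_1 - Q_2\|_\infty$, and $\gamma < 1$ forces $Q_1 = Q_2$. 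Together with the first half, $\overline{Q}^\pi$ is the unique solution of \eqref{eqn:upper-envelope-Bellman}; as a cross-check, the same contraction shows that iterating $T$ converges to $\overline{Q}^\pi$ from any start, consistent with Theorem \ref{thm:monotonic} and Proposition \ref{pro:convergence_rate}.

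I expect the main obstacle to be the existence half rather than the uniqueness half. The delicate point is upgrading ``pointwise domination $\tilde Q \succeq \overline{Q}^\pi$ together with equal reward $R[\tilde Q] = R[\overline{Q}^\pi]$'' into the exact identity $\tilde Q = \overline{Q}^\pi$: this is where the full-support assumption on $\mu_{0,\pi}$ and the monotonicity of $\B^\pi$ are both essential, and one must also verify via Definition \ref{def:super} that the constructed $\tilde Q$ genuinely lies in $\F$ so that the envelope machinery of Lemmas \ref{lem:upper-contained} and \ref{lem:point2function} applies.
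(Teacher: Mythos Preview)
Your proposal is correct, and the uniqueness half matches the paper's argument almost verbatim: both exploit that the composite map $T$ is a $\gamma$-contraction in $\|\cdot\|_\infty$, coming from the contractive envelope together with $\B^\pi Q - \B^\pi Q' = \gamma\,\Ppi(Q-Q')$.

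The existence half, however, takes a genuinely different route. The paper does \emph{not} argue in one step that the optimizer is already a fixed point. Instead, starting from any optimizer $P$ of \eqref{eqn:upper_bound_opt_framework}, it applies $T$ repeatedly to produce a monotone nondecreasing feasible sequence $P \preceq Q^\dag \preceq (Q^\dag)^\dag \preceq \cdots$, and invokes the \emph{boundedness} of $\F$ (which is in the lemma's hypotheses) to pass to a limit $Q^\infty$ that is simultaneously optimal and a solution of \eqref{eqn:upper-envelope-Bellman}. Your one-step argument is cleaner but leans on the full-support assumption on $\mu_{0,\pi}$ (and implicitly on continuity of the envelope functions) to upgrade ``$\tilde Q \succeq \overline{Q}^\pi$ with $R[\tilde Q]=R[\overline{Q}^\pi]$'' to the pointwise identity $\tilde Q = \overline{Q}^\pi$. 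That hypothesis is not listed in the lemma itself, though it is a standing assumption elsewhere in the paper, so this is a difference in packaging rather than a gap. In short: your route trades the boundedness hypothesis for full support and gets a shorter proof; the paper's route stays within the lemma's stated hypotheses at the cost of a limiting argument.
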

    \begin{proof}
    \textbf{Existence}
    
    Suppose $P$ is a optimum solution for \eqref{eqn:upper_bound_opt_framework}, if $P$ satisfies equation \eqref{eqn:upper-envelope-Bellman} then we are done.
    Otherwise $P$ satisfies $P\in \F$ and $P(x_i)\leq \B^\pi P(x_i),~\forall i\in[n]$.
    Consider $q^\dag_{i} = \B^\pi P(x_i)$, its corresponding upper-envelope function $Q^\dag$ satisfies:
    $$
    Q^\dag(x) = \max_{Q\in \F}\{Q(x),~\text{s.t.}~Q(x) \leq \B^\pi P(x_i)\} \geq P(x).
    $$
    Thus $Q^\dag \succeq P$.
    By Bellman inequality we have:
    $$
    \B^\pi Q^\dag(x_i) \geq \B^\pi P(x_i) \geq Q^\dag(x_i),
    $$
    We have $Q^\dag$ is in $\F$ and also satisfies Bellman inequality.
    By repeating this process until it converge to $Q^{\infty}$, we will eventually get $Q^\infty(x_i)$ satisfies equation \eqref{eqn:upper-envelope-Bellman} and
    $Q^\infty \succeq P$ which means $Q^\infty$ is also at least a optimal solution to optimization framework \eqref{eqn:upper_bound_opt_framework}.
    
    \textbf{Uniqueness}
    
    Consider there are two functions $Q_1$ and $Q_2$ satisfy upper-envelope Bellman equation in \eqref{eqn:upper-envelope-Bellman}.
    Consider $q_i^{k} = \B^\pi Q^{k}(x_i),~\forall i\in [n], k\in \{1,2\}$, and let $q^k$ to denote the vector of $[q_1^k, q_2^k,...,q_n^k]^\top$, we have the infinity norm of $q^1 - q^2$ to be:
    \begin{align*}
        q_i^1 - q_i^2 =& \gamma \P^\pi (Q^1(x_i) - Q^2(x_i)) \\
        =& \gamma \E_{x'\sim \T^\pi(\cdot|x_i)}[Q^1(x') - Q^2(x')] \\
    =& \gamma \E_{x'\sim \T^\pi(\cdot|x_i)}[\max_{P\in \F}\{P(x),~\text{s.t.}~ P(x_j)\leq q_j^1,\forall j\in [n]\} - \max_{P\in \F}\{P(x),~\text{s.t.}~ P(x_j)\leq q_j^2,\forall j\in [n]\}] \\
    \leq& \gamma \|q^1 - q^2\|_{\infty},
    \end{align*}
    where the last inequality is from contractive property.
    This means $\|q^1 - q^2\|_{\infty} = 0$, and since $Q^1, Q^2$ are there corresponding upper-envelope, we have $Q^1 = Q^2$.
    \end{proof}

\paragraph{Proposition \ref{pro:lips_update} (and \ref{pro:lips_update_lower})}
    {\itshape
    Suppose $\mu_{0,\pi}$ is a full-support distribution over $\Sset\times \Aset$.
    Consider the optimization in \eqref{equ:algupdate} with $\mathcal F = \mathcal F_\eta$ in \eqref{equ:defLip}. We have 
    \begin{align} 
    \begin{split} 
        & \overline{Q}_{t}(x) = \min_{j\in[n]}(\overline{q}_{t,j} + \eta d(x, x_j)),  \\
        &\overline{q}_{t+1,i} = \mathcal B^\pi \overline{Q}_t(x_i), ~~~~\forall i \in [n]
        \end{split}
    \end{align}
    }
    \begin{proof}
    Consider $Q\in \F_{\eta}$, for upper bound case, we have:
    $$
    Q(x) \leq Q(x_i) + \eta d(x,x_i) \leq \overline{q}_{t,i} + \eta d(x,x_i),~\forall i\in [n].
    $$
    Therefore $Q(x) \leq \min_{i\in [n]}\{\overline{q}_{t,i} + \eta d(x,x_i)\}$.
    
    Consider the upper envelope function $\overline{Q}_t$ which achieves $\overline{Q}_t(x) = \min_{i\in [n]}\{\overline{q}_{t,i} + \eta d(x,x_i)\}$.
    
    By Lemma \ref{lem:upper-contained} we have:
    $$
    \overline{Q}_t = \arg\max_{Q\in \F_{\eta}} \{R[Q],~~\text{s.t.}~~Q(x_i)\leq \overline{q}_{t,i}\}.
    $$
    
    Similarly we can prove for the lower bound case in Proposition \ref{pro:lips_update_lower}.
    \end{proof}
    
\subsection{Monotonic Convergence}
\label{sec:monotonic-proof}

It is well known that the Bellman operator is a contractive map when $\gamma \in (0,1)$, with $Q^\pi$ as the unique fixed point. 
Therefore,  $(\mathcal B^\pi)^{t} Q$ converges to $Q^\pi$ as $t\to \infty$ for any $Q$.

Another property of special importance in our work is the 
monotonicity of Bellman operator. 
For two functions $Q_1$ and $Q_2$ on $\mathcal S \times \mathcal A$, 
we say that $Q_1 \succeq Q_2$ if $Q_1(s,a) \geq Q_2(s,a)$ for $\forall (s,a)\in \mathcal S \times \mathcal A$. Then we have 
\begin{equation*}
    Q_1 \succeq Q_2 ~~~~\Rightarrow ~~~~\B^\pi Q_1 \succeq \B^\pi Q_2. 
\end{equation*}
Thus if we can establish $Q\succeq \B^\pi Q$ (which known as the Bellman inequality \citep{bertsekas1995dynamic}), 
we have $Q \succeq \B^\pi Q \succeq (\B^\pi)^2 Q \succeq ... \succeq (\B^\pi)^{\infty} Q = Q^\pi$, which yields a sequence of increasingly tight upper bounds of $Q^\pi$. 
We leverage a similar idea to prove the monotonicity of our proposed algorithm. 

We are ready to present our main result of convergence, 
in which we show that update \eqref{equ:algupdate} monotonically improves the bound and converges to the optimal solution of \eqref{eqn:upper_bound_opt_framework}, 
if $\F$ is upper-self-contained and $\{\overline{q}_{t,i}\}$ is initialized properly such that they decrease during the first iteration. 
\begin{thm}
\label{thm:monotonic_general}
Assume $\F$ is upper-self-contained function set whose corresponding upper envelope operator is contractive and our evaluate distribution $\mu_{0,\pi}(s,a) = \mu_0(s) \cdot \pi(a|s)$ is full support on $\Sset\times \Aset$, e.g. $\mu_{0,\pi}(x) > 0,~\forall x$.
If we initialize the updates in \eqref{equ:algupdate} with $\overline{q}_{0,i}$, 
such that 
\begin{equation}\label{equ:q1q0}
    \overline{q}_{0,i} \geq \overline{q}_{1,i},~\forall i\in [n], 
\end{equation}
then we have
\begin{equation}
    \overline{Q}_{t}\succeq \overline{Q}_{t+1} \succeq \overline{Q}^\pi,~\forall t= 0, 1,2, \ldots,
\end{equation}
where $\overline{Q}^\pi = \arg\max_{Q\in \F}\{R[Q],~\text{s.t. } Q(x_i) \leq \B^\pi Q(x_i),~\forall i\in[n]\}$.
Therefore, 
$$
R[\overline{Q}_{t}] \geq R[\overline{Q}_{t+1}]  \geq  \overline{R^\pi_{\F}}\geq R^\pi, 
$$ 
and $\lim_{t\to \infty}R[\overline{Q}_{t}] = \overline{R^\pi_{\F}}$. 
\end{thm}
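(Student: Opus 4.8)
The plan is to collapse the functional iteration \eqref{equ:algupdate} to a finite-dimensional fixed-point iteration on the vector of values at the data points, and then marry monotonicity with a contraction argument. Since $\F$ is upper-self-contained, Lemma~\ref{lem:upper-contained} tells us that $\overline{Q}_t$ agrees almost everywhere with the upper envelope $\upenv_{\F}(\{x_i,\overline{q}_{t,i}\})$, so it suffices to track the finite vector $\bar q_t=(\overline{q}_{t,i})_{i=1}^n$. Writing $(T\bar q)_i:=\B^\pi\,\upenv_{\F}(\{x_j,\bar q_j\})(x_i)$, the update becomes simply $\bar q_{t+1}=T\bar q_t$, and the problem is to understand the scalar map $T\colon\RR^n\to\RR^n$.

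Next I would establish two properties of $T$. First, \emph{monotonicity}: if $\bar p\ge\bar q$ componentwise, then Lemma~\ref{lem:point2function} gives $\upenv(\bar p)\succeq\upenv(\bar q)$, and the monotonicity of $\B^\pi$ yields $T\bar p\ge T\bar q$. Second, \emph{contraction}: expanding $\B^\pi Q(x_i)=r_i+\gamma\,\Ppi Q(x_i)$, the reward terms cancel, so $|(T\bar p)_i-(T\bar q)_i|=\gamma\,|\Ppi(\upenv(\bar p)-\upenv(\bar q))(x_i)|\le\gamma\|\upenv(\bar p)-\upenv(\bar q)\|_\infty\le\gamma\|\bar p-\bar q\|_\infty$, where the last step invokes the assumed contractivity of the envelope operator. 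Hence $T$ is a $\gamma$-contraction in $\|\cdot\|_\infty$.

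By Banach's fixed-point theorem $T$ has a unique fixed point $\bar q^{*}$ and $\bar q_t\to\bar q^{*}$ from any initialization, and Lemma~\ref{lem:inequality2equality} identifies $\upenv(\bar q^{*})$ with the global optimizer $\overline{Q}^\pi$ of \eqref{eqn:upper_bound_opt_framework}. To get the monotone tightening I would use the hypothesis $\bar q_1\le\bar q_0$ as the base case and iterate the monotonicity of $T$: $\bar q_{t+1}=T\bar q_t\le T\bar q_{t-1}=\bar q_t$, so $\bar q_t$ is non-increasing; a non-increasing real sequence converging to $\bar q^{*}$ stays above its limit, giving $\bar q_t\ge\bar q^{*}$, and Lemma~\ref{lem:point2function} then upgrades this to $\overline{Q}_t\succeq\overline{Q}_{t+1}\succeq\overline{Q}^\pi$. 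For the reward statements, $R[\cdot]=\E_{\mu_{0,\pi}}[\cdot]$ is a monotone expectation, so $R[\overline{Q}_t]\ge R[\overline{Q}_{t+1}]\ge R[\overline{Q}^\pi]=\overline{R^\pi_\F}$; feasibility of $Q^\pi$ in \eqref{eqn:upper_bound_opt_framework} gives $\overline{R^\pi_\F}\ge R^\pi$; and since $|R[\overline{Q}_t]-R[\overline{Q}^\pi]|\le\|\overline{Q}_t-\overline{Q}^\pi\|_\infty\to0$ (as $\mu_{0,\pi}$ is a probability measure), we conclude $\lim_t R[\overline{Q}_t]=\overline{R^\pi_\F}$.

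I expect the main obstacle to lie in the third step: showing that the fixed point reached by the contraction is genuinely the global optimizer $\overline{Q}^\pi$ rather than merely some stationary point, which is exactly the existence-and-uniqueness content of Lemma~\ref{lem:inequality2equality}, together with the clean observation that monotone-decreasing iterates never fall below their limit. The initialization hypothesis \eqref{equ:q1q0} is precisely what makes the base case of the monotonicity induction valid; without it the contraction still forces convergence (as in Proposition~\ref{pro:convergence_rate}), but the one-sided monotone tightening of the bound could fail.
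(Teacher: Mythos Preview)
Your proposal is correct and follows essentially the same route as the paper: reduce to the finite vector $\bar q_t$ via Lemma~\ref{lem:upper-contained}, run the monotonicity induction using Lemma~\ref{lem:point2function} and the monotonicity of $\B^\pi$, and identify the limit with $\overline{Q}^\pi$ through Lemma~\ref{lem:inequality2equality}. The one organizational difference is that you front-load the $\gamma$-contraction of the map $T$ and invoke Banach to obtain convergence and the fixed point directly, whereas the paper argues monotonicity first and then appeals somewhat loosely to ``Cauchy sequence with a lower bound''; your packaging is tidier and makes explicit (consistent with Proposition~\ref{pro:convergence_rate}) that convergence itself does not depend on the initialization hypothesis~\eqref{equ:q1q0}.
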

    \begin{proof}
    We focus on upper bound case, lower bound proofs follow similarly.
    Since $\mu_{0,\pi}$ is full support, from lemma \ref{lem:upper-contained} we can see that $\overline{Q}_{t+1}$ is the upper envelope function of data points $(x_i, \overline{q}_{t+1,i})$.
    
    Now we prove the theorem by induction on $t$ for statement $\overline{Q}_t \succeq \overline{Q}_{t+1}$. 
    \begin{enumerate}
        \item Base case. $t = 0$.
        From Lemma \ref{lem:point2function} we have:
        $$
        \overline{Q}_0 \succeq \overline{Q}_1.
        $$
        \item Induction Step.
        Suppose $\overline{Q}_{t-1}\succeq \overline{Q}_t$.
        Then $\overline{q}_{t,i} = r_i + \gamma \Ppi \overline{Q}_{t-1}(x_i) \geq r_i + \gamma \Ppi \overline{Q}_t(x_i) = \overline{q}_{t+1,i}$,
        from lemma \ref{lem:point2function} we have:
        $$
        \overline{Q}_{t}\succeq \overline{Q}_{t+1}.
        $$
    \end{enumerate}
    
    From the induction proof we can see that $\{\overline{Q}_{t}(x)\}$ is a Cauchy sequence with a lower bound for every data point $x$, we know it will finally converge to a function we denote as $\overline{Q}_{\infty}$.
    
    $\overline{Q}_{\infty}\in \F$ will satisfy the constraints $\overline{Q}_{\infty}(x_i) = \B^\pi \overline{Q}_{\infty}(x_i),~\forall i\in [n]$.
    
    On the other hand, from Lemma \ref{lem:inequality2equality} we know that it is the $\overline{Q^\pi_{\F}} = Q^\infty$ almost everywhere.
    
    This leads to a monotone sequence of measures $\{R[\overline{Q}_t]\}_{t=0}^\infty$ with a limit $\overline{\Rpi_\F}$.
    \end{proof}
    
A parallel result holds for the lower bound update \eqref{eqn:lips_update_lower}, 
except that the initialization condition should be $\underline{q}_{0,i}\leq\underline{q}_{1,i}$. See Appendix \ref{sec:lower_bound} for details.

\noindent\textbf{Application to Lipschitz Function Space} General convergence result can be easily applied to the case of Lipschitz functions. We show that the Lipschitz ball $\F_\eta$ satisfies the 
\textit{upper-self-contained} condition, 
and provide a simple initialization method to ensure condition \ref{equ:q1q0}. 
In addition, we establish a fast convergence rate for our algorithm. 

\begin{lem}\label{lem:Lips_superior_init}
i) The Lipschitz ball   $\F_\eta= \{f: L_d(f)\leq \eta\}$ is \textit{upper-self-contained} whose envelope operators are contractive. 

ii) %
Following  the update in \eqref{eqn:lips_update} starting from 
\begin{align}
    \overline{q}_{0,i} = \frac{1}{1-\gamma}\left(r_i + \gamma \eta\E_{x_i'\sim \T^\pi(\cdot|x_i)}[d(x_i, x_i')]\right), 
\end{align}
we have $\overline{q}_{0,i}\geq \overline{q}_{1,i}$ for $\forall i \in [n]$. 

Similarly, for the lower bound 
initializing the update in \eqref{eqn:lips_update_lower} with %
\begin{align*}
    \overline{q}_{0,i} = \frac{1}{1-\gamma}\left(r_i - \gamma \eta\E_{x_i'\sim \T^\pi(\cdot|x_i)}[d(x_i, x_i')]\right), 
\end{align*}
ensures $\underline{q}_{0,i}\leq \underline{q}_{1,i}$ for $\forall i \in [n]$. 

Therefore, the results in theorem~\ref{thm:monotonic} hold. 
\end{lem}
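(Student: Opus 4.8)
The plan is to reduce the statement to Theorem~\ref{thm:monotonic_general}: once I verify that $\F_\eta$ is upper-self-contained (resp. lower-self-contained) with a contractive envelope operator, and that the stated initialization satisfies the hypothesis $\overline{q}_{0,i}\ge\overline{q}_{1,i}$ of that theorem, the monotone convergence asserted in Theorem~\ref{thm:monotonic} follows directly. Thus the work splits into part (i) (the two structural properties) and part (ii) (the initialization inequality).

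For part (i), I would first invoke Proposition~\ref{pro:lips_update}, which already identifies the upper envelope of data $(x_i,f_i)$ over $\F_\eta$ with the explicit cone minimum $g(x)=\min_{j\in[n]}(f_j+\eta d(x,x_j))$. The key self-contained check is that this $g$ re-enters $\F_\eta$: each map $x\mapsto f_j+\eta d(x,x_j)$ is $\eta$-Lipschitz because $|d(x,x_j)-d(x',x_j)|\le d(x,x')$ by the triangle inequality, and a pointwise minimum of $\eta$-Lipschitz functions is again $\eta$-Lipschitz, so $\|g\|_{d,\mathrm{Lip}}\le\eta$. Closure of $\F_\eta$ under the infinity norm is the remaining ingredient: if $f_k\to f$ uniformly with $\|f_k\|_{d,\mathrm{Lip}}\le\eta$, then passing to the limit in $|f_k(x)-f_k(x')|\le\eta d(x,x')$ gives $f\in\F_\eta$. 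Since $\F_\eta$ contains all constant functions, the feasibility requirement $\inf_{f\in\F_\eta}f(x_i)\le f_i$ is automatic. Contractivity of $\upenv_{\F_\eta}$ is then an elementary minimum estimate: with $a_j(x)=p_j+\eta d(x,x_j)$ and $b_j(x)=q_j+\eta d(x,x_j)$ one has $|\min_j a_j(x)-\min_j b_j(x)|\le\max_j|a_j(x)-b_j(x)|=\max_j|p_j-q_j|$ pointwise, and taking $\sup_x$ gives the bound. The lower-self-contained and $\loenv_{\F_\eta}$ cases are identical using $\max_j(f_j-\eta d(x,x_j))$.

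For part (ii), the substance is a single telescoping identity. Using the explicit form of $\overline{Q}_0$ and selecting the index $j=i$ in its defining minimum, I bound $\overline{Q}_0(x_i')\le\overline{q}_{0,i}+\eta d(x_i',x_i)$ for every successor $x_i'$. Substituting into $\overline{q}_{1,i}=\B^\pi\overline{Q}_0(x_i)=r_i+\gamma\,\E_{x_i'\sim\T^\pi(\cdot|x_i)}[\overline{Q}_0(x_i')]$ gives $\overline{q}_{1,i}\le r_i+\gamma\,\overline{q}_{0,i}+\gamma\eta\,\E_{x_i'}[d(x_i,x_i')]$, and the chosen value of $\overline{q}_{0,i}$ is exactly the one satisfying $(1-\gamma)\,\overline{q}_{0,i}=r_i+\gamma\eta\,\E_{x_i'}[d(x_i,x_i')]$, so the right-hand side collapses to $\overline{q}_{0,i}$ and $\overline{q}_{1,i}\le\overline{q}_{0,i}$. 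The lower-bound initialization is symmetric: replace $\min$ by $\max$, take $j=i$ to get $\underline{Q}_0(x_i')\ge\underline{q}_{0,i}-\eta d(x_i',x_i)$, and use the sign-flipped constant to obtain $\underline{q}_{1,i}\ge\underline{q}_{0,i}$.

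I expect the only genuinely delicate point to be the self-contained verification in part (i): one must confirm both that the abstract supremum-defined envelope coincides with the explicit $\min_j$ formula (supplied by Proposition~\ref{pro:lips_update}) and, more importantly, that this formula lies back in $\F_\eta$, since without the latter Theorem~\ref{thm:monotonic_general} does not apply. The contractivity estimate and the initialization identity are, by contrast, short computations once the envelope formula is available.
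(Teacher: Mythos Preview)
Your proposal is correct and follows essentially the same route as the paper: part (i) identifies the envelope with the explicit cone minimum via Proposition~\ref{pro:lips_update}, checks it is $\eta$-Lipschitz, and derives contractivity from the elementary $|\min_j a_j-\min_j b_j|\le\max_j|a_j-b_j|$ estimate; part (ii) selects $j=i$ in the minimum and uses the defining identity $(1-\gamma)\overline{q}_{0,i}=r_i+\gamma\eta\,\E[d(x_i,x_i')]$ to collapse the bound, exactly as the paper does. If anything you are slightly more careful than the paper in spelling out closure of $\F_\eta$ under uniform limits and the feasibility condition, which the paper leaves implicit.
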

    \begin{proof}
    i) 
    Consider $g(x) = \max_{f\in \F_{\eta}}\{f(x): f(x_i) \leq f_i\}$ for given data points $(x_i, f_i)$, we can see that:
    $$
    g(x) = \min_{i\in [n]}\{ f_i + \eta d(x,x_i) \}
    $$
    we can see that $g(x)$ is $\eta-$Lipschitz continuous.
    
    For the contraction property, from Proposition \ref{pro:lips_update} the upper envelope operator can be written in the following way:
    $$
    \upenv_{\F_{\eta}}(\{x_i, f_i\}_{i\in [n]})(x) = \min_{i\in [n]}\{f_i + \eta d(x,x_i)\},
    $$
    then we have:
    \begin{align*}
        &\upenv_{\F_{\eta}}(\{x_i, f_i\}_{i\in [n]})(x) - \upenv_{\F_{\eta}}(\{x_i, g_i\}_{i\in [n]})(x) \\
        =& \min_{i\in [n]}\{f_i + \eta d(x,x_i)\} - \min_{i\in [n]}\{g_i + \eta d(x,x_i)\} \\
        \leq& \max_{i\in [n]} \{f_i - g_i\},
    \end{align*}
    which implies contraction.
    
    ii) For $\overline{q}_{0,i} = \frac{1}{1-\gamma}\left(r_i + \gamma \eta\E_{x_i'\sim \T^\pi(\cdot|x_i)}[d(x_i, x_i')]\right)$, we have:
    \begin{align*}
        \overline{q}_{0,i} =&  \gamma \overline{q}_{0,i} + r_i + \gamma \eta\E_{x_i'\sim \T^\pi(\cdot|x_i)}[d(x_i, x_i')] \\
        =& r_i + \gamma \E_{x_i'\sim \T^\pi(\cdot|x_i)}[\overline{q}_{0,i} + \eta d(x_i, x_i')] \\
        \geq& r_i + \gamma \E_{x_i'\sim \T^\pi(\cdot|x_i)}[\min_{j\in [n]}\{\overline{q}_{0,j} + \eta d(x_j, x_i')\}] \\
        =& \overline{q}_{1,i}.
    \end{align*}
    
    Similarly we have $\underline{q}_{0,i}\leq \underline{q}_{1,i}$.
    \end{proof}

From Theorem \ref{thm:monotonic_general} and Lemma \ref{lem:Lips_superior_init} we can immediate get Theorem \ref{thm:monotonic}. 
Similarly we can prove the lower bound case.

\subsection{Linear Convergence}
We are ready to prove the linear convergence result for Proposition \ref{thm:error_bound} which follow the similar idea of proving linear convergence of value iteration.

\paragraph{Proposition \ref{pro:convergence_rate}}
    {\itshape
    Following the updates in \eqref{eqn:lips_update}, we have 
    $$
     \zt{R_{\mu_{0,\pi}}}[\overline{Q}_t] - \overline{\Rpi_\F} \leq C \frac{\gamma^t}{1-\gamma},
    $$
    with constant $C := \max_{i\in [n]}|\overline{q}_{1,i} - \overline{q}_{0,i}|$. 
    }
    \begin{proof}
    Consider $\|\overline{q}_{t+1} - \overline{q}_t\|_{\infty}$, where $\overline{q}_t = [\overline{q}_{t,1},\ldots, \overline{q}_{t,n}]^\top$, we have:
    \begin{align*}
        \|\overline{q}_{t+1} - \overline{q}_t\|_{\infty} =& \max_{i} \{\overline{q}_{t+1,i} - \overline{q}_{t,i}\} \\
        =& \gamma \max_{i} \E_{x_i'\sim \T^\pi(\cdot|x_i)}\left[\min_{j\in [n]}\{\overline{q}_{t,j} + \eta d(x_i',x_j)\} - \min_{k\in [n]}\{\overline{q}_{t-1,k} + \eta d(x_i',x_k)\}\right] \\
        \leq& \gamma \max_{i} \E_{x_i'\sim \T^\pi(\cdot|x_i)}\left[\max_{j\in [n]}\{\overline{q}_{t,j} - \overline{q}_{t-1,j} + (\eta d(x_i',x_j) - \eta d(x_i',x_j))\}\right] \\
        =& \gamma \|\overline{q}_t - \overline{q}_{t-1}\|_{\infty}.
    \end{align*}
    
    Therefore we have $\|\overline{q}_{t+1} - \overline{q}_t\|_{\infty} \leq \gamma^t \|\overline{q}_1 - \overline{q}_0\|_{\infty}$, this leads to:
    \begin{align*}
        \|\overline{q}_t - \overline{q}\|_{\infty} =& \|\overline{q}_t - \overline{q}_{\infty}\|_{\infty} \\
        \leq& \sum_{i=t}^{\infty} \|\overline{q}_{i+1} - \overline{q}_i\|_{\infty} \\
        \leq& \sum_{i=t}^{\infty} \gamma^i \|\overline{q}_1 - \overline{q}_0\|_{\infty} \\
        =& \frac{\gamma^t}{1-\gamma} \|\overline{q}_1 - \overline{q}_0\|_{\infty}.
    \end{align*}
    
    To conclude we have:
    \begin{align*}
        \zt{R_{\mu_{0,\pi}}}[\overline{Q}_t] - \overline{\Rpi_\F} =& \E_{x\sim \mu_{0,\pi}}[\overline{Q}_t(x) - \overline{Q}^\pi(x)] \\
        =& \E_{x\sim \mu_{0,\pi}}[\min_{j\in [n]}\{\overline{q}_{t,j} + \eta d(x,x_j)\} - \min_{k\in [n]}\{\overline{q}_k + \eta d(x,x_k)\}] \\
        \leq& \|\overline{q}_t - \overline{q}\|_{\infty} \\
        =& \frac{\gamma^t}{1-\gamma} \|\overline{q}_1 - \overline{q}_0\|_{\infty}.
    \end{align*}
    \end{proof}

\subsection{Tightness of Lipschitz-Based Bounds}

\paragraph{Theorem \ref{thm:error_bound}}
    {\itshape
    Let $\F = \F_{\eta}$ be Lipschitz function class with Lipschitz constant $\eta$.
    Suppose $\X = \Sset \times \Aset$ is a compact and bounded domain equipped with a distance $d\colon \X\times \X \to \RR$. 
    For a set of data points 
    $X = \{x_i\}_{i=1}^n$, we define its covering radius to be 
    $$\varepsilon_{X} = \sup_{x\in \X} \min_{i} d(x,x_i).$$
    We have 
    $$
    \overline{\Rpi_{\F}} - \underline{\Rpi_{\F}} \leq \frac{2\eta}{1-\gamma} \varepsilon_X,
    $$
    where $\eta$ is the Lipschitz constant and $\gamma$ the discount factor. 
    }
    
    \begin{proof}
    Consider $\|\overline{q} - \underline{q}\|_{\infty}$, where $\overline{q} = [\overline{q}_1, \overline{q}_2, ..., \overline{q}_n]^\top$ and $\underline{q} = [\underline{q}_1, \underline{q}_2, ..., \underline{q}_n]^\top$ as the vector of upper and lower functions $\overline{Q}, \underline{Q}$ at point $x_1, x_2,...,x_n$.
    
    Since $r_i = q_i - \gamma \P^\pi Q(x_i)$ for all $Q$ satisfies the finite points Bellman equation (see more details in Lemma \ref{lem:inequality2equality}), we have:
    \begin{align*}
        \overline{q}_i - \underline{q}_i =& \gamma \P^\pi (\overline{Q}(x_i) - \underline{Q}(x_i)) \\
        =& \gamma \E_{x_i'\sim \T^\pi(\cdot|x_i)}\left[\overline{Q}(x_i') - \underline{Q}(x_i') \right] \\
        \leq& \gamma \E_{x_i'\sim \T^\pi(\cdot|x_i)}\left[\min_{j}\{\overline{q}_j - \underline{q}_j + 2\eta d(x_i',x_j)\} \right] \\
        \leq& \gamma \|\overline{q} - \underline{q}\|_{\infty} + 2\eta \gamma \E_{x_i'\sim \T^\pi(\cdot|x_i)}\left[\min_{j} d(x_i',x_j)\right].
    \end{align*}
    
    Let $\varepsilon_n = \sup_{x\in \X} \min_i \{d(x,x_i)\} $, which is the epsilon ball radius given centers $\{x_i\}_{i=1}^n$, typically $\varepsilon \approx O(n^{-\frac{1}{d}})$.
    We have:
    \begin{align*}
        \|\overline{q} - \underline{q}\|_{\infty} =& \max_{i} |\overline{q}_i - \underline{q}_i| \\
        \leq&  \gamma \|\overline{q} - \underline{q}\|_{\infty} + 2\eta \gamma \max_{i} \E_{x_i'\sim \T^\pi(\cdot|x_i)}\left[\min_{j} d(x_i',x_j)\right] \\
        \leq& \gamma \|\overline{q} - \underline{q}\|_{\infty} + 2\eta \gamma \sup_{x\in \X} \min_j d(x, x_j) \\
        =& \gamma \|\overline{q} - \underline{q}\|_{\infty} + 2\eta \gamma \varepsilon_n.
    \end{align*}
    Thus we have:
    \begin{equation*}
        \|\overline{q} - \underline{q}\|_{\infty} \leq \frac{2\eta \gamma}{1-\gamma} \varepsilon_n
    \end{equation*}
    
    Consider $\|\overline{Q} - \underline{Q}\|_{\infty}$, we have:
    \begin{align*}
        \|\overline{Q} - \underline{Q}\|_{\infty} =& \sup_{x\in \X} |\overline{Q}(x) - \underline{Q}(x)|\\
        \leq& \sup_{x\in \X}|\min_{j}\{\overline{q}_j - \underline{q}_j + 2\eta d(x,x_j)\}| \\
        \leq & \|\overline{q} - \underline{q}\|_{\infty} + 2\eta \sup_{x\in \X} \min_j \{d(x,x_j)\} \\
        \leq & \frac{2\eta \gamma}{1-\gamma} \varepsilon_n + 2\eta \varepsilon_n \\
        =& \frac{2\eta }{1-\gamma} \varepsilon_n.
    \end{align*}
    Therefore we have:
    \begin{align*}
        \overline{\Rpi_{\F}} - \underline{\Rpi_{\F}} =& \E_{x\sim \mu_{0,\pi}}[\overline{Q}(x) - \underline{Q}(x)] \\
        \leq& \frac{2\eta}{1-\gamma} \varepsilon_n
    \end{align*}
    \end{proof}

\subsection{Additional Proofs}
\paragraph{Proposition \ref{pro:monotonic_subsampling}}
    {\itshape
    Follow equation \eqref{eqn:lips_update_subsample} with initialization follows \eqref{equ:defq0}, let $\overline{Q}_t$ to be the upper envelope function of data points $\{x_i, \overline{q}_{t,i}\}_{i=1}^n$, we have a similar monotonic result as theorem~\ref{thm:monotonic}:
    \begin{equation*}
        \overline{Q}_{t}\succeq \overline{Q}_{t+1} \succeq \overline{Q}^\pi,~\forall t= 0, 1,2, \ldots,
    \end{equation*}}
    \begin{proof}
    The proof is the same as Theorem \ref{thm:monotonic_general}, once we see that we gradually $\overline{q_i}$ but they will always stay ahead the upper bound $\overline{Q^\pi}(x_i)$, but the convergence is the same as the original algorithm.
    \end{proof}

\paragraph{Proof of Theorem \ref{thm:Q_lips}}
    {\itshape
    Let $\langle \Sset\times\Aset, d_x\rangle$ be a metric space for state action pair $x$ and $\langle \Sset, d_s\rangle$ be a metric space for state $s$.
    Suppose $d_x$ is separable so that $d_x(x_1, x_2) = d_s(s_1, s_2)$ if $a_1 = a_2$.
    If the reward function $r$ and the transition $\T$ are both Lipschitz in the sense that
    \begin{align*}
        &r(x_1) - r(x_2) \leq \|r\|_{\mathrm{Lip}} d_x(x_1,x_2)\\
        &d_s(\T(x_1), \T(x_2)) \leq \|\T\|_{\mathrm{Lip}} d_x(x_1, x_2),~\forall x_1, x_2. 
    \end{align*}
    We can prove that if $\gamma \|\T\|_{\mathrm{Lip}} < 1$, we have
    \begin{equation}
        \|Q^\pi\|_{\mathrm{Lip}}\leq  \frac{\|r\|_{\mathrm{Lip}}}{1-\gamma \|\T\|_{\mathrm{Lip}}},
    \end{equation}
    when $\pi$ is a constant policy. 
    Furthermore, for optimal policy $\pi^*$ with value function $Q^*$, we have:
    \begin{equation}
        \|Q^*\|_{\mathrm{Lip}}\leq  \frac{\|r\|_{\mathrm{Lip}}}{1-\gamma \|\T\|_{\mathrm{Lip}}},
    \end{equation}
    }
    \begin{proof}
    Suppose $a_i = \arg\max_{a} \{Q^\pi(\T^i(x_1), a) - Q^\pi(\T^i(x_2), a)\}$, where $\T^1(x_j) = \T(x_j)$ and $\T^i(x_j) = \T(\T^{i-1}(x), a_{i-1}),~\forall j\in\{1,2\}$ is defined recursively.
    
    If $\pi$ is a constant policy where $\pi(a|s_1) = \pi(a|s_2) = \pi(a),~\forall a, s_1, s_2$, we can actually write $Q^\pi(x) - Q^\pi(x_2)$ as:
    \begin{align*}
        Q^\pi(x_1) - Q^\pi(x_2) =& (r(x_1) - r(x_2)) + \gamma \int_a \left(\pi(a|T(x_1))Q^\pi(\T(x_1),a) - \pi(a|T(x_2))Q^\pi(\T(x_2),a)\right) da \\
        =& (r(x_1) - r(x_2)) + \gamma \int_a \pi(a)\left(Q^\pi(\T(x_1),a) - Q^\pi(\T(x_2),a)\right) da \\
        \leq& (r(x_1) - r(x_2)) + \gamma \max_a \left(Q^\pi(\T(x_1),a) -  Q^\pi(\T(x_2),a)\right) \\
        =& (r(x_1) - r(x_2)) + \gamma \left(Q^\pi(\T(x_1),a_1) -  Q^\pi(\T(x_2),a_1)\right).
    \end{align*}
    
    And similarly we have 
    $$
    Q^\pi(\T^{i-1}(x_1),a_{i-1}) -  Q^\pi(\T^{i-1}(x_2),a_{i-1}) \leq r(\T^{i-1}(x_1),a_{i-1}) - r((\T^{i-1}(x_2),a_{i-1})) + \gamma \left(Q^\pi(\T^{i}(x_1),a_{i}) -  Q^\pi(\T^{i}(x_2),a_{i})\right).
    $$
        
    Therefore we have:
    \begin{align*}
        Q^\pi(x_1) - Q^\pi(x_2) \leq & (r(x_1) - r(x_2)) + \sum_{i=1}^\infty \gamma^i (r(\T^i(x_1),a_i) - r(\T^i(x_2),a_i)) \\
        \leq& \lambda_r d_x(x_1, x_2) + \sum_{i=1}^\infty \gamma^i (r(\T^i(x_1),a_i) - r(\T^i(x_2), a_i)) ~\textcolor{blue}{\text{//according to definition of max operator over }a_i}\\
        \leq& \lambda_r d_x(x_1, x_2) + \sum_{i=1}^\infty \gamma^i \lambda_r d_x( [\T^i(x_1),a_i], [\T^i(x_2),a_i]) ~\textcolor{blue}{\text{//Lipschitz of reward function}}\\
        =& \lambda_r d_x(x_1, x_2) + \sum_{i=1}^\infty \gamma^i \lambda_r d_s( \T^i(x_1), \T^i(x_2)) ~\textcolor{blue}{\text{//by the assumption } d_x(x_1,x_2) = d_s(s_1,s_2)~\text{if }a_1 = a_2}\\
        \leq& \lambda_r \left( d_x(x_1, x_2) + \sum_{i=1}^\infty \gamma^i \lambda_T^i d_x(x_1, x_2) \right)\\
        =& \frac{\lambda_r}{1-\gamma \lambda_T} d_x(x,\bar x).
    \end{align*}
    
    The last inequality can be proved inductively by
    \begin{align*}
        d_s( \T^i(x_1), \T^i(x_2)) \leq& \lambda_T d_x([\T^{i-1}(x_1), a_{i-1}], [\T^{i-1}(x_2), a_{i-1}]) \\
        =& \lambda_T d_s (\T^{i-1}(x_1), \T^{i-1}(x_2) \\
        \leq & ... \\
        \leq & \lambda_T^{i-1} d_s(\T(x_1), \T(x_2)) \\
        \leq & \lambda_T^i d_x(x_1, x_2).
    \end{align*}
    
    For $Q^*$ case we can have the similar derivation where at the beginning of the proof we have:
    \begin{align*}
        Q^*(x_1) - Q^*(x_2) =& (r(x_1) - r(x_2)) + \gamma (\max_a \{Q^\pi(\T(x_1),a)\} - \max_a \{Q^\pi(\T(x_2),a)\}) \\
        \leq& (r(x_1) - r(x_2)) + \gamma \max_a \left(Q^\pi(\T(x_1),a) -  Q^\pi(\T(x_2),a)\right) \\
    \end{align*}
    \end{proof}

\section{More Discussions on Lipschitz Norm}
\label{sec:app-lips}

We show the non-identifiable results of upper bound of Lipschitz norm by constructing a set of possible $Q$ functions that are consistent with the data but with an unbounded increasing Lipschitz norm.

We first show that if our function set provides at least two different solutions, we have a nontrivial solution in null space.
\begin{lem}[Null Space of Finite Bellman Constraints]\label{lem:null-space}
There is a non-zero Lipschitz continual function $G$ with $\|G\|_{d,\mathrm{Lip}}\leq 2\eta$ such that:
$$
G(x_i) = \gamma \Ppi G(x_i),~\forall i\in [n]\,,
$$
once the solution in $\F_{\eta}$ is not unique.
\end{lem}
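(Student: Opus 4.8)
The plan is to produce the desired $G$ simply by subtracting two distinct solutions of the finite-sample Bellman equations. By hypothesis the solution in $\F_\eta$ is not unique, so there exist two functions $Q_1, Q_2 \in \F_\eta$, with $Q_1 \neq Q_2$, each satisfying $Q_k(x_i) = \B^\pi Q_k(x_i) = r_i + \gamma \Ppi Q_k(x_i)$ for all $i \in [n]$ and $k \in \{1,2\}$. I would define $G := Q_1 - Q_2$, and the rest of the argument is checking the three required properties: that $G$ is non-zero, that $\|G\|_{d,\mathrm{Lip}} \leq 2\eta$, and that $G$ solves the homogeneous constraint $G(x_i) = \gamma \Ppi G(x_i)$.

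Non-triviality is immediate: since $Q_1 \neq Q_2$ as functions, $G \not\equiv 0$. For the Lipschitz bound, I would invoke the fact that $\|\cdot\|_{d,\mathrm{Lip}}$ in \eqref{equ:defLip} is a seminorm, hence subadditive, so that $\|G\|_{d,\mathrm{Lip}} = \|Q_1 - Q_2\|_{d,\mathrm{Lip}} \leq \|Q_1\|_{d,\mathrm{Lip}} + \|Q_2\|_{d,\mathrm{Lip}} \leq \eta + \eta = 2\eta$, using that both $Q_1, Q_2 \in \F_\eta$. For the homogeneous equation, the key point is that the transition operator $\Ppi$ in \eqref{eqn:transition_op} is linear, being an expectation. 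Subtracting the two Bellman equations at each data point cancels the shared reward term $r_i$ and yields $G(x_i) = Q_1(x_i) - Q_2(x_i) = \gamma\bigl(\Ppi Q_1(x_i) - \Ppi Q_2(x_i)\bigr) = \gamma \Ppi G(x_i)$ for every $i \in [n]$, as desired.

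There is essentially no hard part here; the argument is a direct difference-of-solutions computation. The only two facts worth stating explicitly are the triangle inequality for the Lipschitz seminorm (which controls the constant $2\eta$) and the linearity of $\Ppi$ (which lets the reward term drop out so that the equation becomes homogeneous). If anything requires care, it is merely noting that the constant is $2\eta$ rather than $\eta$: the two solutions need not be comparable in Lipschitz norm beyond each lying in the ball $\F_\eta$, so the best general bound from subadditivity is the sum of the two radii. This suffices for the subsequent non-identifiability construction, which uses $G$ as a null-space perturbation that can be scaled to inflate the apparent Lipschitz constant while preserving consistency with the data.
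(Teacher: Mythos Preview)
Your proposal is correct and matches the paper's proof essentially line for line: define $G = Q_1 - Q_2$ for two distinct solutions, bound the Lipschitz seminorm by subadditivity, and subtract the Bellman equations to cancel $r_i$ via the linearity of $\Ppi$. If anything, your write-up is slightly more explicit than the paper's in naming the triangle inequality and linearity as the two operative facts.
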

\begin{proof}
Suppose $Q_1, Q_2\in \F_{\eta}$ satisfies all the finite Bellman constraints.
Consider $G = Q_1 - Q_2$, we have $\|G\|_{d,\mathrm{Lip}}\leq 2\eta$ and
$$
G(x_i) = Q_1(x_i) - Q_2(x_i) = \gamma \Ppi (Q_1 - Q_2)(x_i) = \gamma \Ppi G(x_i).
$$
\end{proof}

Using the nontrivial solution in null space we can construct arbitrarily large Lipschitz norm solution of $Q$ that are consistent with data.
\begin{thm}[Non-identifiable of Upper Bound of Lipschitz Function]
If there is more than one Lipschitz functions satisfies finite Bellman constraints,
For all $\eta$ we can always find $Q$ satisfies Bellman constraint and $\|Q\|_{d,\mathrm{Lip}}\geq \eta$.
\end{thm}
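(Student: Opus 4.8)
The plan is to exploit the linear structure of the finite-sample Bellman constraints: the feasible set forms an affine subspace, so once it contains two distinct points it contains an entire line, along which the Lipschitz seminorm grows without bound. Concretely, let $Q_1 \neq Q_2$ be two Lipschitz functions satisfying the finite Bellman constraints, as guaranteed by the hypothesis, and set $G = Q_1 - Q_2$. By Lemma~\ref{lem:null-space} (whose construction is exactly this difference), $G$ is a nonzero Lipschitz function satisfying the homogeneous relation $G(x_i) = \gamma \Ppi G(x_i)$ for all $i \in [n]$. I would then consider the one-parameter family $Q_c := Q_1 + c\,G$ indexed by $c \in \RR$, and show that its Lipschitz norm can be driven above any prescribed $\eta$ while feasibility is preserved.

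First I would check that every $Q_c$ still satisfies the finite Bellman constraints. Since $Q_1(x_i) = r_i + \gamma \Ppi Q_1(x_i)$ and $\Ppi$ is linear, for each $i$ we have $Q_c(x_i) = Q_1(x_i) + c\,G(x_i) = r_i + \gamma\Ppi Q_1(x_i) + c\,\gamma\Ppi G(x_i) = r_i + \gamma \Ppi Q_c(x_i) = \B^\pi Q_c(x_i)$, so $Q_c$ is feasible for every $c$. Next I would bound the Lipschitz seminorm from below via the reverse triangle inequality, $\|Q_c\|_{d,\mathrm{Lip}} = \|cG + Q_1\|_{d,\mathrm{Lip}} \geq |c|\,\|G\|_{d,\mathrm{Lip}} - \|Q_1\|_{d,\mathrm{Lip}}$, where $\|Q_1\|_{d,\mathrm{Lip}}$ is finite because $Q_1$ is Lipschitz. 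Provided $\|G\|_{d,\mathrm{Lip}} > 0$, choosing $|c|$ large enough, for instance $|c| \geq (\eta + \|Q_1\|_{d,\mathrm{Lip}})/\|G\|_{d,\mathrm{Lip}}$, forces $\|Q_c\|_{d,\mathrm{Lip}} \geq \eta$, which establishes the claim for the given $\eta$.

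The one step that needs genuine care, and which I expect to be the main obstacle, is verifying that $\|G\|_{d,\mathrm{Lip}} > 0$, that is, that $G$ is not merely nonzero but non-constant; a nonzero constant would have zero Lipschitz seminorm and the blow-up argument would collapse. Here I would invoke $\gamma \in (0,1)$: if $G$ were a constant $c_0$, the homogeneous relation $G(x_i) = \gamma \Ppi G(x_i)$ would read $c_0 = \gamma c_0$, forcing $(1-\gamma)c_0 = 0$ and hence $c_0 = 0$, contradicting $G \neq 0$. Therefore $G$ must be non-constant, so $\|G\|_{d,\mathrm{Lip}} > 0$, and the perturbation family $Q_c$ realizes solutions of arbitrarily large Lipschitz norm, completing the argument.
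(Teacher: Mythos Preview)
Your argument is correct and follows essentially the same route as the paper: take $G$ from the null-space lemma, form the one-parameter family $Q_1 + cG$ (the paper uses $Q^\pi + \lambda G$, which is the same affine line), verify feasibility by linearity of $\Ppi$, and use the reverse triangle inequality with $|c| \geq (\eta + \|Q_1\|_{d,\mathrm{Lip}})/\|G\|_{d,\mathrm{Lip}}$. Your proof is in fact slightly more careful than the paper's, since you explicitly rule out the case $\|G\|_{d,\mathrm{Lip}} = 0$ via the $\gamma<1$ argument, whereas the paper divides by $\|G\|_{d,\mathrm{Lip}}$ without comment.
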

\begin{proof}
By using $G$ in Lemma \ref{lem:null-space} we can construct a set of $Q_{\lambda}$ satisfies finite samples Bellman equation in \eqref{equ:Bellmani}.
$$
Q_\lambda = Q^\pi + \lambda G.
$$
where if we pick $\lambda \geq (\eta + \|Q^\pi\|_{d,\mathrm{Lip}})/\|G\|_{d,\mathrm{Lip}}$ we have:
$$
\|Q_\lambda\|_{d,\mathrm{Lip}} \geq (\eta + \|Q^\pi\|_{d,\mathrm{Lip}}) - \|Q^\pi\|_{d,\mathrm{Lip}} = \eta.
$$
and $Q_\lambda$ satisfies finite samples Bellman constraints:
$$
Q_\lambda(x_i) = Q^\pi(x_i) + \lambda G(x_i) = r_i + \gamma \Ppi (Q^\pi(x_i) + \lambda G(x_i)) = \B^\pi Q_\lambda(x_i).
$$
\end{proof}

\section{Experimental Settings}
\label{sec:experiment_details}

\paragraph{Comparison Results using Thomas Bounds\cite{thomas2015high}}
We compare our method with lower bound estimation from \citet{thomas2015high}, whose bound leverage a sophisticated concentration bound by importance sampling estimators.
Their method is based on the unbiased \textit{importance sampling} \citep{precup00eligibility} estimator of $\R^\pi$:
$$
    \hat{R}^{\pi}_{\rm IS} := \underbrace{R(\tau)}_{\text{return}}\underbrace{\prod_{t=1}^{T}\frac{\pi(a_t|s_t)}{\pi_{0}(a_t|s_t)}}_{\text{importance weight}}\,,
$$
where $\tau$ is the trajectory and $R(\tau)$ is the normalized and discounted average reward of a trajectory.
They obtain a high confidence lower bounds for $\hat{R}^{\pi}_{\rm IS}$ by leveraging the concentration inequality with an adjust threshold parameter specified by user.

\begin{table}[ht]
    \centering
    \renewcommand{\arraystretch}{1.2}
\scalebox{0.90}{
\begin{tabular}{c|c|c|c|c|c|c}
    \hline
    Number of Trajectories & 2 & 4 & 6 & 10 & 20 & 30 \\
    \hline
    Thomas Relative Lower Bound & \textbf{-8.61e-03} & -2.87e-03 & -1.72e-03 & -9.56e-04 & -4.53e-04 & -2.97e-04\\
    \hline
    Our Relative Lower Bound & -0.131 & \textbf{0.343} & \textbf{0.536} & \textbf{0.698} &  \textbf{0.805} & \textbf{0.850}\\
    \hline
\end{tabular}
}
    \caption{Comparison with Thomas Lower Bound \citet{thomas2015high}, close to 1 is better.}
    \label{tab:my_label}
\end{table}

We empirically evaluate the method on Pendulum environment, where we use the same default settings as we conduct experiments for our algorithms.
Table \ref{tab:my_label} shows the results compared to our lower bound.
We pick the best result choosing the threshold number from $\{0.001, 0.01, 0.1, 1.0, 10.0\}$ and we set the confidence level of Thomas lower bound to be $95\%$.
All the numbers are the relative reward that divided by the ground truth.

We can see that Thomas' lower bound is not sensitive to small number of samples, which is almost near 0.
This is mainly because the importance ratio between the target policy $\pi(a|s)$ and the behavior policy $\pi_0(a|s)$ for each trajectory sample goes to 0 due to the curse of horizon (we use horizon length = 100 here) ,
which makes IS based estimator not a proper method for long (or infinite) horizon problems.
As a consequence, concentration confidence bounds based on IS estimators could be potentially loose in such problems (The number of trajectories used in \citet{thomas2015high} can be $n = 10^{7}$ if they want to get a tight lower bound).

\paragraph{Synthetic Environment with A Known Value Function}
The transition of this environment is a one dimension linear function:
$$
\T(s,a) = 0.8 s - 0.4 a - 0.1,
$$
and the target policy we use is
$$
\pi(s,\xi) = 1.5 s - 0.1 + \xi,~\xi \sim \N(0,\sigma^2),
$$
where Gaussian variance $\sigma = e^{-5}$.
And the historical data is pre-collected by a behavior policy $\pi_0$ similar to $\pi$ but with a larger variance.

\begin{figure}
    \centering
    \begin{tabular}{cccc}
        \includegraphics[width = 0.23\textwidth]{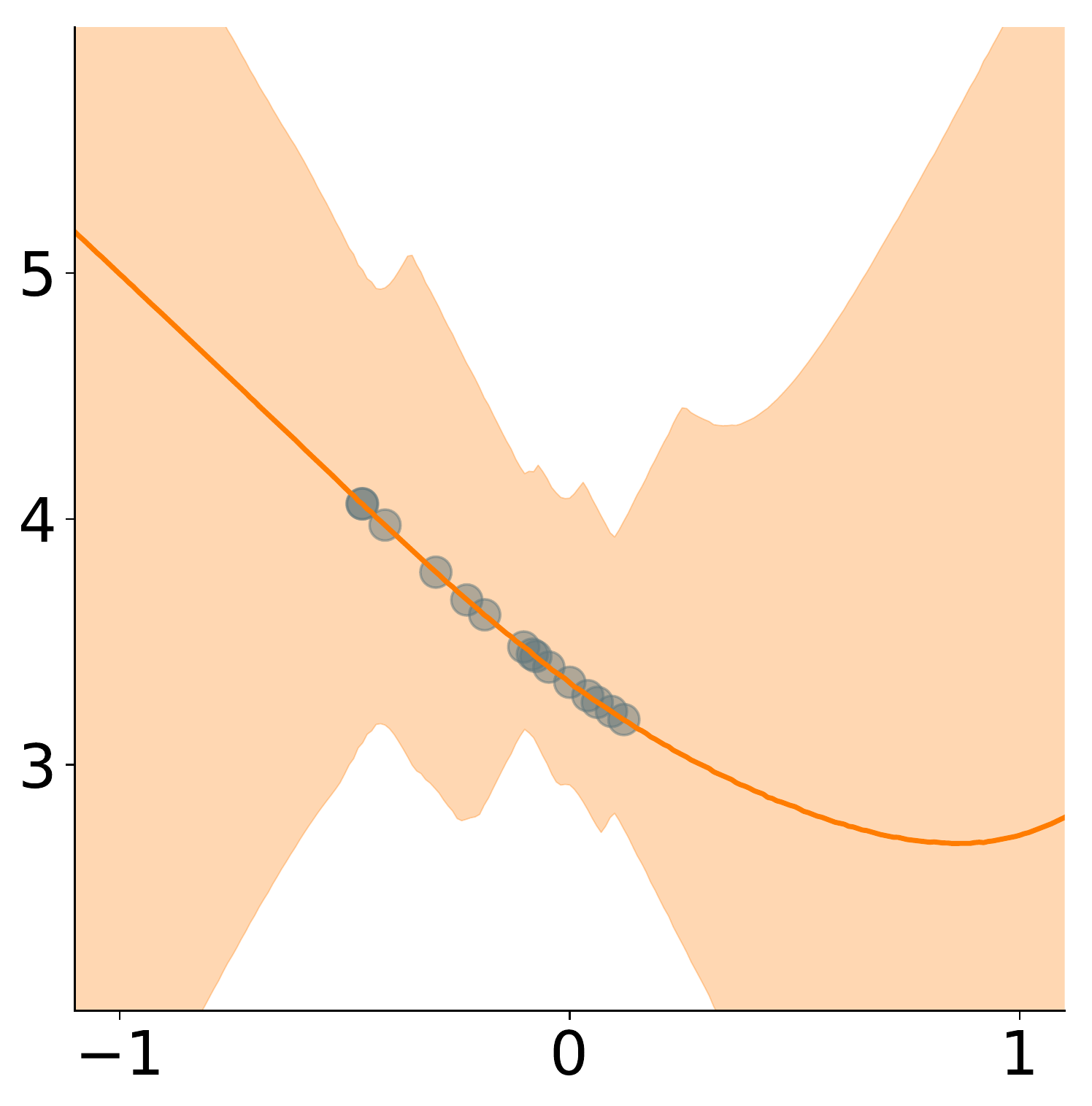}&
        \includegraphics[width = 0.23\textwidth]{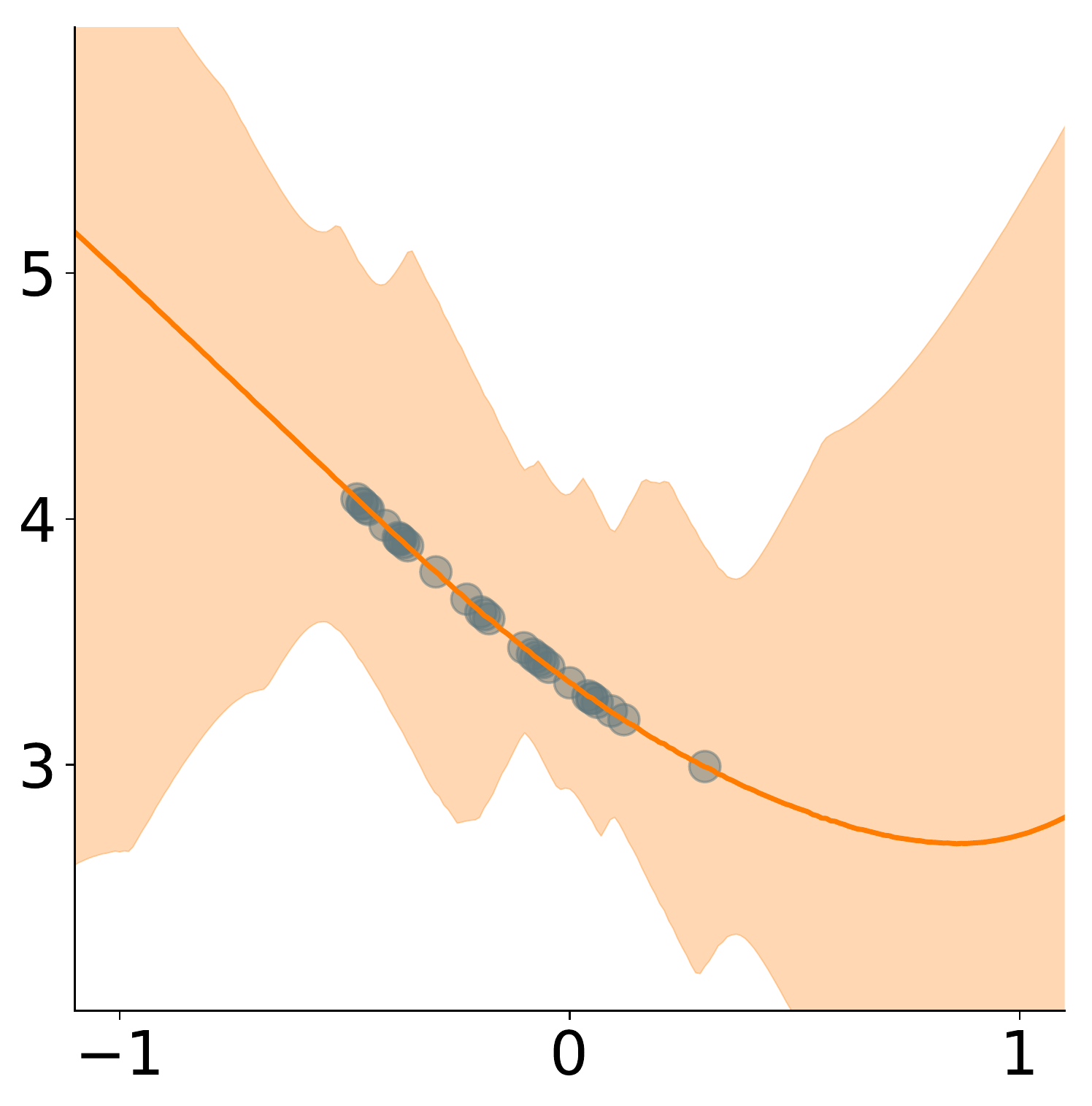}&
        \includegraphics[width = 0.23\textwidth]{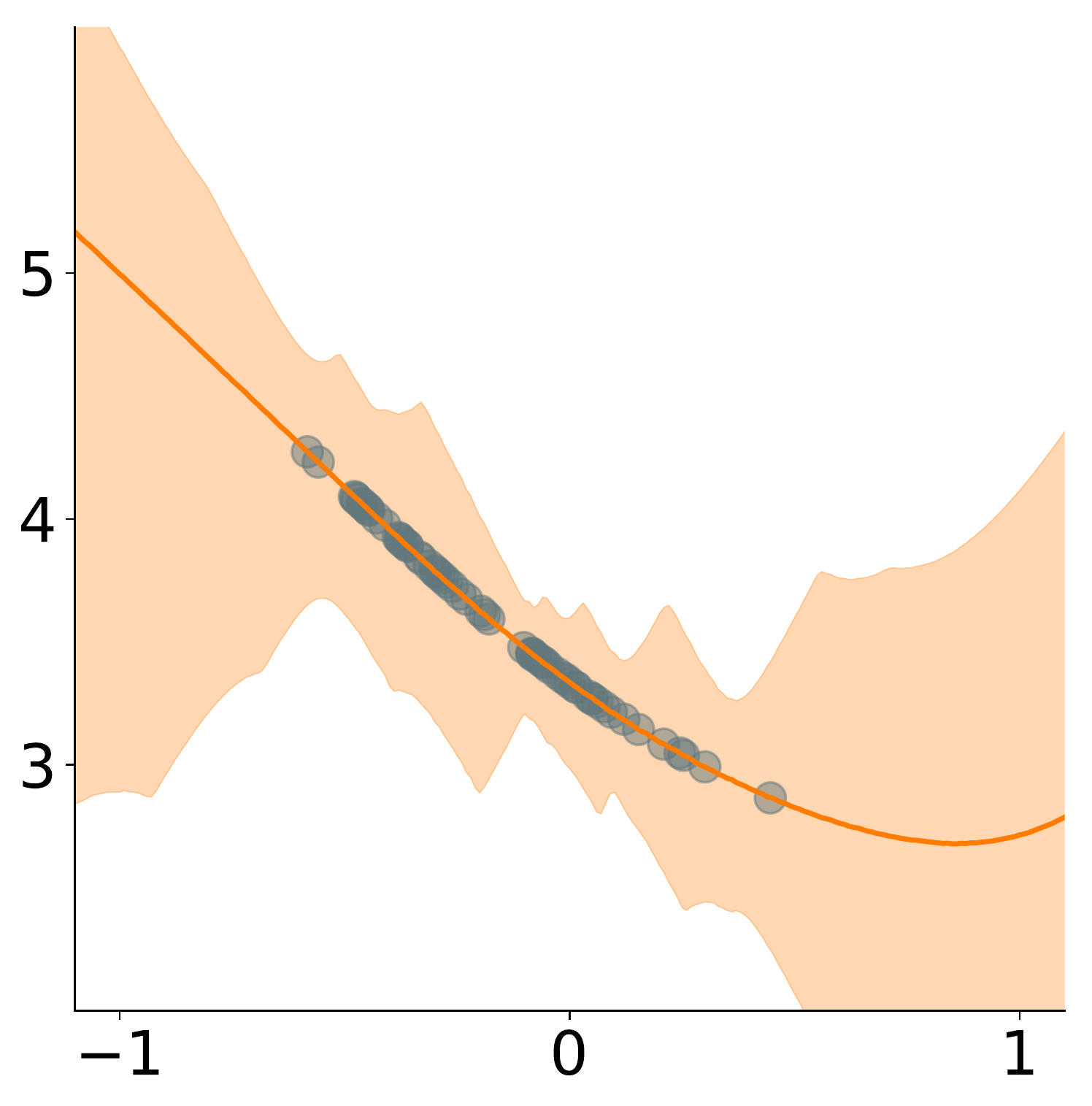}&
        \includegraphics[width = 0.23\textwidth]{figure/toy/value_demonstration_toy_size=100.pdf}\\
        $n = 15$ & $n=30$ & $n=60$ & $n=100$
    \end{tabular}
    \caption{Landscape of upper and lower bound of $V^\pi$ with different number of samples.}
    \label{fig:demonstration}
\end{figure}

The predefined q-function is $Q^\pi(s,a) = f(s) + f(a-\frac{\pi}{2})$ where $f(x) = \sqrt{x^2 + x \sin(x) + 1}$.
For distance metric we use Euclidean distance $d(x,y) = \|x-y\|_2$.
Under this distance metric, we can calculate the exact Lipschitz constant:
\begin{align*}
    L_d(Q^\pi) =& \sup_{s,a} \lim_{\epsilon\to 0} \frac{Q^\pi(s,a) - Q^\pi(s + \frac{\sqrt{2}}{2}\epsilon, a + \frac{\sqrt{2}}{2}\epsilon))}{\epsilon} \\
    =& 2\sup_{s} \lim_{\epsilon \to 0} \frac{f(s) - f(s + \frac{\sqrt{2}}{2}\epsilon)}{\epsilon} \\
    =& 2
\end{align*}

Figure \ref{fig:demonstration} shows a full landscape of upper and lower bounds of state value function $V^\pi$ under different number of samples, similar to Figure \ref{fig:RL_toy}(e).

\paragraph{Pendulum Environment}
We learn a feature map $\Phi$ for $Q^\pi$ by a two hidden layers neural network $[f_1, f_2, f_3]$, where the input layer is state action pair $x_0 = x$, the first hidden layer is $x_1 = f_1(x) = \text{RELU}(W_1^\top x + b_1)$ with $100$ hidden dimension matrix $W_1, b_1$.
We set $x_2 = f_2(x_0, x_1) = [x_0, \text{RELU}(W_2^\top x_1 + b_2)]^\top$ as feature layer, where we let the concatenate the input layer and a relu of linear layer for the hidden layer as our feature.
We add the input layer to our feature layer to ensure that our distance function $d(x_0, \bar x_0) = \|x_2 - \bar x_2\|$ is a true distance function (not a semi-distance one because $x_2 = \bar x_2$ requires $x = \bar x$).
And finally the last layer is a linear layer with output dimension $1$ as the output of q-function.
Thus our approximate q-function can be represented as
$$
Q(x) = W_3^\top \Phi(x) + b_3,
$$
where $\Phi(x) = f_2(x, f_1(x))$.

We apply fitted value evaluation algorithm \cite{munos2008finite,le2019batch} to use off-policy data to pre-train a $Q^\pi$, then we use the feature layer $x_2 = \Phi(x)$ as our feature, and set the Lipschitz constant approximately 2 times the $\ell_2$ norm of the last linear layer parameter $W_3$ as our default parameter.

\paragraph{HIV simulator}
We follow exactly the same settings as \citet{liu2018representation}.

\end{document}